\def\eqref#1{equation~\ref{#1}}
\def\1{\bm{1}}
\DeclareMathAlphabet{\mathsfit}{\encodingdefault}{\sfdefault}{m}{sl}
\SetMathAlphabet{\mathsfit}{bold}{\encodingdefault}{\sfdefault}{bx}{n}
\newcommand{\E}{\mathbb{E}}
\newcommand{\R}{\mathbb{R}}
\DeclareMathOperator*{\argmin}{arg\,min}
\newcommand{\A}{\mathcal{A}}
\newcommand{\D}{\mathcal{D}}
\newcommand\Dist{\mathrm{Dist}}
\newcommand{\x}{\times}
\theoremstyle{definition}
\newtheorem{definition}{Definition}
\newtheorem{assumption}{Assumption}
\theoremstyle{remark}
\crefname{section}{Section}{Sections}
\crefname{algorithm}{Algorithm}{Algorithms}
\crefname{definition}{Definition}{Definitions}
\crefname{assumption}{Assumption}{Assumptions}
\crefname{equation}{Equation}{Equations}
\crefname{figure}{Figure}{Figures}
\crefname{appendix}{Appendix}{Appendices}
\crefname{table}{Table}{Tables}
\crefname{thm}{Theorem}{Theorems}
\crefname{prop}{Proposition}{Propositions}
\newcommand{\rebuttal}[1]{{#1}}
\definecolor{sb_blue}{RGB}{31,119,180}
\definecolor{sb_orange}{RGB}{255,127,14}
\definecolor{sb_green}{RGB}{44,160,44}
\definecolor{sb_red}{RGB}{214,39,40}
\definecolor{sb_purple}{RGB}{148,103,189}
\definecolor{sb_brown}{RGB}{140,86,75}
\definecolor{sb_pink}{RGB}{227,119,194}
\definecolor{sb_cyan}{RGB}{23,190,207}
\definecolor{our_yellow}{RGB}{193,190,69}
\definecolor{our_gray}{RGB}{127,127,127}
\newcommand{\cblock}[1]{
\raisebox{3pt}{\fboxsep=3pt \fcolorbox{#1}{#1!50}{\null}}
}
\newcommand{\cdashed}[1]{
\begin{tikzpicture}   
\draw[fill=none, draw=none] (0,0) circle (3pt);
\draw[color={#1}, ultra thick, densely dashed] (-0.15,-0.01) -- (0.15,-0.01);
\end{tikzpicture}
\hspace{-6pt}
}
\title{Fairness in Reinforcement \\ Learning with Bisimulation Metrics}
\author{%
  Sahand Rezaei-Shoshtari \\
  McGill University and Mila \\  
  \And   
  Hanna Yurchyk \\
  McGill University and Mila \\  
  \And
  Scott Fujimoto \\
  McGill University and Mila \\
  \AND 
  Doina Precup \\
  McGill University, Mila, and DeepMind \\
  \And 
  David Meger \\
  McGill University and Mila \\
}
\begin{document}

\maketitle

\begin{abstract}
Ensuring long-term fairness is crucial when developing automated decision making systems, specifically in dynamic and sequential environments. By maximizing their reward without consideration of fairness, AI agents can introduce disparities in their treatment of groups or individuals. In this paper, we establish the connection between bisimulation metrics and group fairness in reinforcement learning. We propose a novel approach that leverages bisimulation metrics to learn reward functions and observation dynamics, ensuring that learners treat groups fairly while reflecting the original problem. We demonstrate the effectiveness of our method in addressing disparities in sequential decision making problems through empirical evaluation on a standard fairness benchmark consisting of lending and college admission scenarios.
\end{abstract}

\section{Introduction}
\label{sec:intro}
As machine learning continues to shape decision making systems, understanding and addressing its potential risks and biases becomes increasingly imperative. This concern is especially pronounced in sequential decision making, where neglecting algorithmic fairness can create a self-reinforcing cycle that amplifies existing disparities~\citep{jabbari2017fairness, d2020fairness}. In response, there is a growing recognition of the importance of leveraging reinforcement learning (RL) to tackle decision making problems that have traditionally been approached through supervised learning paradigms, in order to achieve long-term fairness~\citep{nashed2023fairness}. \rebuttal{\citet{yin2023long} define long-term fairness in RL as the optimization of the cumulative reward subject to a constraint on the cumulative utility, reflecting fairness over a time horizon.}

%Unlike the standard fairness criteria that apply to individual decisions, long-term fairness emphasizes satisfying these constraints over an extended time horizon. 
%Despite this shift, the exploration of long-term fairness throughout sequential decision making processes remains relatively limited in the literature \citep{caton2024fairness}. 
%In this work, we aim to leverage well-established abstraction techniques in reinforcement learning (RL) \citep{ferns2011bisimulation, li2006towards} to propose a novel approach for achieving long-term fairness in sequential decision making problems.

Recent efforts to achieve fairness in RL have primarily relied on metrics adopted from supervised learning, such as demographic parity \citep{dwork2012fairness} or equality of opportunity \citep{hardt2016equality}. These metrics are typically integrated into a constrained Markov decision process (MDP) framework to learn a policy that adheres to the criterion~\citep{wen2021algorithms, yin2023long, satija2023group, hu2022achieving}. However, this approach is limited by its requirement for complex constrained optimization, which can introduce additional complexity and hyperparameters into the underlying RL algorithm. Moreover, these methods make the implicit assumption that stakeholders are incorporating these fairness constraints into their decision making process. However, in reality, this may not occur due to various external and uncontrollable factors \citep{kusner2020long}. 

In this work, we highlight a surprising connection between group fairness in RL and the bisimulation metric \citep{ferns2004metrics, ferns2011bisimulation}, an equivalence metric that captures the behavioral similarity between states. We show that minimizing the bisimulation metric between members of different groups results in demographic parity fairness. Building upon this insight, we propose a practical algorithm that, guided by the bisimulation metric, adjusts the reward and observation dynamics (how the observations change in the environment) to achieve long-term fairness in RL.

By modifying the observable MDP---the rewards and the observations seen by the agent---we show that unconstrained policy optimization inherently satisfies the fairness constraint in the original, unmodified MDP. This concept is analogous to real-world practices, where regulatory frameworks are established to influence decision making processes---for instance, governments impose lending regulations on banks to ensure fairness and equity \citep{fdicFair}. A significant advantage of our method is that it does not require modifying the underlying RL solver. This allows us to leverage the strengths of deep RL while avoiding the complexities and intricacies associated with other constrained optimization methods used to achieve fairness in RL.

Through comprehensive evaluation on a standard fairness benchmark \citep{d2020fairness}, widely used in the literature \citep{xu2024adapting, deng2024what, hu2023striking, yu2022policy}, we show that our unconstrained approach outperforms strong baselines for long-term fairness. Our code is submitted in the supplemental material and will be publicly available. Our contributions are: 
\begin{enumerate}[noitemsep]
    \item Establishing the connection between bisimulation metrics and group fairness in RL.
    \item Developing a novel method that allows unconstrained optimization of a policy to automatically achieve demographic parity fairness.   
    \item Implementing a practical algorithm, guided by bisimulation metrics, that when coupled with an unmodified RL algorithm, achieves fairness on a standard benchmark.
\end{enumerate}
Ultimately, the connection to bisimulation metrics offers a novel unconstrained perspective on achieving fairness in RL, and we establish the initial foundations in this direction. 

\section{Background}
\label{sec:background}
We consider an MDP defined by a 5-tuple $( {\mathcal{S}}, {\mathcal{A}}, {\tau_a}, {R}, \gamma )$, with \emph{state space} ${\mathcal{S}}$, \emph{action space} ${\mathcal{A}}$, \emph{transition dynamics} ${\tau_a  :  \mathcal{S}  \times  \mathcal{A}  \to  \Dist(\mathcal{S})}$, where $\Dist(\mathcal{S})$ is the probability simplex over $\mathcal{S}$, \emph{reward function} ${R :  \mathcal{S}  \!\times\!  \mathcal{A}}  \!\to\!  \mathbb{R}$, and \emph{discount factor} $\gamma  \!\in\!  (0, 1]$. The \emph{Value function} $V^\pi(s_t) = \mathbb{E}_{\pi} \left[\sum_{k=0}^\infty \gamma^k R(S_{t+k}, A_{t+k}) \mid S_t=s\right]$ denotes the expected return from $s$ under policy $\pi$. The goal is to find a policy $\pi: \mathcal{S} \!\!\to\!\! \Dist(\mathcal{A})$ that maximizes the expected return $J^{\pi} = \mathbb{E}_{s \sim \rho^{\pi}(s)}[V^{\pi}(s)]$.

%defined as $\mathbb{E}_\pi [ R_t ]  \!=\!  \mathbb{E}_\pi  [ \sum_{k=0}^\infty  \gamma^{k} r_{t + k + 1} ]$.

% Value functions are fixed points of the Bellman equation and can be computed iteratively through a process referred to as \emph{policy evaluation} \citep{sutton2018reinforcement}. Similarly, optimal value functions $V^*(s)$ are fixed points of the Bellman optimality equation.

The \emph{bisimulation relation} for MDPs \citep{Desharnais02, givan2003equivalence} captures the concept of behavioral similarity and is defined below. 

% \begin{definition}[Bisimulation \cite{Desharnais02,givan2003equivalence}] \label{def:bisim}
%     A \emph{bisimulation relation} on an MDP $\mathcal{M}$ is an equivalence relation $B$ on $\mathcal{S}$ such that if $sBt$ holds for $s, t \in \mathcal{S}$, then for the state partition $\mathcal{S} / B$ induced by $B$:
% \end{definition}
% \vspace{-16pt}
%     \begin{equation*}
%         R(s, a) = R(t, a) \quad \text{and} \quad \tau_a(C | s) = \tau_a(C | t), \quad \forall a \in \mathcal{A}, \forall C \in \mathcal{S} / B        
%     \end{equation*}     
% The bisimulation relation is rigid as it requires exact equivalence under probabilistic transitions. Instead, the \emph{bisimulation metric} \cite{ferns2004metrics, ferns2011bisimulation} measures this equivalence relation as an approximation:
% \begin{align}
%     \label{eq:bisim_metric}
%     d_{\sim}\big(s_i, s_j \big) = \max_{a \in {\mathcal{A}}} \big|R(s_i, a) - R (s_j, a) \big| + \gamma W_1(d_{\sim}) \big(\tau_a (\cdot | s_i), \tau_a(\cdot | s_j) \big),
% \end{align}
% where the first term measures reward similarity, $W_1$ is the Kantorovich (Wasserstein) metric measuring the distance between the transition probabilities, and $\gamma  \in  (0, 1]$ is the discount factor.

\begin{definition}[Bisimulation] \label{def:bisim}
    A \emph{bisimulation relation} on an MDP $\mathcal{M}$ is an equivalence relation $B \subseteq \mathcal{S} \times \mathcal{S}$ such that if $s_iBs_j$ holds for $s_i, s_j \in \mathcal{S}$, the following properties are true:
    % \vspace{-16pt}
        \begin{equation*}
            R(s_i, a) = R(s_j, a) \quad \text{and} \quad \tau_a(C | s_i) = \tau_a(C | s_j), \quad \forall a \in \mathcal{A}, \forall C \in \mathcal{S}_B        
        \end{equation*}     
    where $\mathcal{S}_B$ is the state partition of equivalence classes defined by $B$. Two states $s_i,s_j \!\!\in\! \mathcal{S}$ are \textit{bisimilar} if there exists a bisimulation relation $B$ such that $(s_i,s_j) \in B$. The largest $B$ is denoted as $\sim$. 
\end{definition}
The bisimulation relation is a rigid concept of state equivalence as it requires the exact equivalence of the reward and the transition probabilities for any pair of bisimilar states. Instead, the \emph{bisimulation metric} \citep{ferns2004metrics, ferns2011bisimulation} measures this equivalence relation as an approximation and is defined as an operator $\mathcal{F}: \mathbb{M} \rightarrow \mathbb{M}$, where $\mathbb{M}$ is the set of all pseudometrics on $\mathcal{S}$, by:
\begin{align}
    \label{eq:bisim_metric}
    \mathcal{F}(d)\big(s_i, s_j \big) = \max_{a \in {\mathcal{A}}} \big( \big|R(s_i, a) - R (s_j, a) \big| + \gamma W_1(d) (\tau_a (\cdot | s_i), \tau_a(\cdot | s_j)) \big)
\end{align}
where $d \in \mathbb{M}$ is a pseudometric, $W_1$ is the 1-Kantorovich (Wasserstein) metric measuring the distance between the transition probabilities. \citet{ferns2004metrics, ferns2011bisimulation} show that $\mathcal{F}$ has a unique fixed point $d_\sim \in \mathbb{M}$ that is a bisimulation metric. $\mathcal{F}$ can be iteratively used to compute $d_\sim$, starting from an initial state $d_0$ and applying $d_{n+1} = \mathcal{F}(d_n) = \mathcal{F}^{n+1}(d_0)$. \citet{ferns2011bisimulation} also show that the bisimulation metric provides an upper bound on the difference between the optimal value functions: 
\begin{equation}
    \left| V^*(s_i) - V^*(s_j) \right| \leq d_{\sim}(s_i, s_j)
    \label{eq:bisim_bound}
\end{equation}
Bisimulation relations require equivalence under all actions, even actions that may lead to negative outcomes, whereas we care about optimal actions. \citet{castro2020scalable} defines the on-policy bisimulation relation, referred to as the $\pi$-bisimulation relation, that takes the behavioral policy into account when measuring behavioral similarity by considering the policy-induced dynamics and reward:
\begin{definition}[$\pi$-Bisimulation]
    A \emph{$\pi$-bisimulation relation} on an MDP $\mathcal{M}$ is an equivalence relation $B^\pi\subseteq \mathcal{S} \times \mathcal{S}$ such that if $s_iB^{\pi}s_j$ holds for $s_i, s_j \in \mathcal{S}$, then the following properties are true:  
    $$
        R^\pi(s_i) = R^\pi(s_j) \quad \text{and} \quad
        \tau^\pi(C | s_i) = \tau^\pi(C | s_j), \quad \forall C \in \mathcal{S}_{B^\pi}
    $$    
    where $R^\pi(s) = \sum_{a \in \mathcal{A}} \pi(a|s) R(s, a)$, $\tau^\pi(C|s) = \sum_{a \in \mathcal{A}} \pi(a | s) \sum_{s' \in C} \tau_a(s'|s)$, and $\mathcal{S}_{B^\pi}$ is the state partition of equivalence classes defined by $B^\pi$.    
    \label{def:pi_bisim}
\end{definition}
% Consequently, the \emph{$\pi$-bisimulation metric} is defined as \citep{castro2020scalable}:
Building on the work of \citet{ferns2004metrics, ferns2011bisimulation}, \citet{castro2020scalable} defines the operator $\mathcal{F}^{\pi}$ as:
\begin{align}
    \label{eq:pi_bisim_metric}
    \mathcal{F}^\pi(d)\big(s_i, s_j \big) = \big|R^\pi(s_i) - R^\pi (s_j) \big| + \gamma W_1(d) \big(\tau^\pi (\cdot | s_i), \tau^\pi(\cdot | s_j) \big),
\end{align}
where $\mathcal{F}$ has a least fixed point $d^{\pi}_\sim$ that is also the \textit{$\pi$-bisimulation metric}. Note that compared to \cref{eq:bisim_metric}, the $\max_{a \in \mathcal{A}}$ operator is dropped because we are considering actions according to $\pi$. Moreover, \citet{castro2020scalable} obtains the upper bound on the difference between the value functions as:
\begin{equation}
    \left| V^\pi(s_i) - V^\pi(s_j) \right| \leq d^\pi_{\sim}(s_i, s_j)
    \label{eq:pi_bisim_bound}
\end{equation}

\section{Problem Formulation}
\label{sec:problem_setup}
Fairness in ML entails ensuring unbiased decision making, and is generally categorized into individual and group fairness. While individual fairness aims to treat individuals similarly, group fairness focuses on ensuring that the distribution of outcomes is similar across different groups \citep{mehrabi2021survey}. In this work, we specifically adopt group fairness, where a group is defined as:
\begin{definition}[Group]
    \label{def:group}
    A \emph{group} is a population associated with the sensitive attribute $g \in \mathcal{G}$.
\end{definition}
In the above definition, a sensitive attribute can include factors such as race, gender, sexual orientation, etc. We further make the following assumptions regarding the sensitive attributes: 
\begin{assumption}
    \label{ass:separable_attributes} 
    Sensitive attributes $\mathcal{G}$ are observable to the decision making algorithm.
\end{assumption}
\begin{assumption}
    \label{ass:consistent_attributes} 
    Sensitive attributes $\mathcal{G}$ and group memberships remain constant during training. 
\end{assumption}
These assumptions are commonly made in prior works on fairness in RL \citep{jabbari2017fairness, wen2021algorithms, satija2023group, yin2023long, xu2024adapting}. Notably, prior works on fairness have showed that removing sensitive attributes from the decision making process, also known as ``fairness through unawareness", is largely ineffective \citep{pedreshi2008discrimination, barocas2023fairness}. Building upon the assumptions above, we define group-conditioned MDPs as:
\begin{definition}[Group-conditioned MDP]
    \label{def:group_mdp}
  A \emph{group-conditioned MDP} is a $6$-tuple: 
  $$
  \mathcal{M}_{\text{group}} = (\mathcal{S}, \mathcal{A}, \mathcal{G}, \tau_{a}:\mathcal{S}\x \A \x \mathcal{G} \to \Dist(\mathcal{S}), R:\mathcal{S}\x \mathcal{A} \x \mathcal{G} \to \R, \gamma)
  $$
  where $\mathcal{S}$ is the state space, $\mathcal{A}$ is the action space, and $\mathcal{G}$ represents the sensitive attribute space. The group-specific transition dynamics are denoted by $\tau_{a}(s' \mid s, g)$, and $\Dist(\mathcal{S})$ is the probability simplex over $\mathcal{S}$. The reward function specific to each group is $R(s, a, g)$, and $\gamma \in (0, 1]$ is the discount factor. The stationary policy is represented by $\pi(a | s,g)$, and the group-specific value function is defined as: $V^\pi(s, g) = \mathbb{E}_{\pi} \left[\sum_{k=0}^\infty \gamma^k R(S_{t+k}, A_{t+k}, g) \mid S_t=s, G = g \right]$ for $s \in \mathcal{S}$ and $g \in \mathcal{G}$. The return of the policy is the expected return, given by: $J^{\pi} = \mathbb{E}_{s,g \sim \rho^{\pi}(s, g)}[V^{\pi}(s, g)]$ where $s,g$ are sampled from the specific stationary state-group distribution $\rho^{\pi}(s, g)$ according to $\pi$.

  % where $\mathcal{S}$ is the state space, $\A$ is the action space, $\mathcal{G}$ is the sensitive attribute space, $\tau_{a} (s' | s, g)$ is the group-specific transition dynamics, where $\Dist(\mathcal{S})$ is the probability simplex over $\mathcal{S}$, $R (s, a , g)$ is the group-specific reward function and $\gamma  \!\in\!  (0, 1]$ is the discount factor. The stationary policy is defined by $\pi$ and the group specific value function is defined by $V^\pi(s, g) = \mathbb{E}_{\pi} \left[\sum_{k=0}^\infty \gamma^k r_{t+k+1} | s, g\right]$ for $s \in \mathcal{S}$ and $g \in \mathcal{G}$. The return of the policy is defined by the expected cumulative reward $J^{\pi} = \mathbb{E}_{s,g \sim \rho^{\pi}(s,g)}[V^{\pi}(s,g)]$, where $\rho^{\pi}(s,g)$ is the stationary state-group distribution under $\pi$.
  % The initial state distribution is also group-specific and is denoted as $p_0(s|g)$. 
\end{definition}
% Notably, in the above definition, we assume sensitive attributes $\mathcal{G}$ are not a subset of the state space~$\mathcal{S}$, i.e., $\mathcal{G} \nsubset \mathcal{S}$. Its role is therefore closer to that of the context in contextual MDPs \citep{hallak2015contextual}. 
% Furthermore, It is important to note that the original MDP does not necessarily need to adhere to this definition. Instead, as we will discuss in \cref{sec:methods}, we are using \cref{def:group_mdp} to decompose the original MDP into multiple independent group-conditioned MDPs.  

% Given this definition, we can define the group specific value function as the expected return for group $g$ starting from state $s$ and following  $\pi(a|s, g)$ thereafter: $V^\pi(s, g) = \mathbb{E}_{\pi} \left[\sum_{k=0}^\infty \gamma^k r_{t+k+1} | s, g\right]$. 

% Similarly to \citet{satija2023group} we use \emph{demographic parity} \citep{dwork2012fairness} as the group fairness notion. Informally, in our RL setting, demographic parity requires that different groups should have similar returns. More formally, this fairness constraint is defined by \citet{satija2023group} as: 
We use \emph{demographic parity} \citep{dwork2012fairness, satija2023group} as the group fairness definition. Informally, demographic parity requires that different groups should have similar returns. Formally, this fairness constraint is defined by \citet{satija2023group} as follows:
\begin{definition}[Demographic parity fairness in RL \citep{satija2023group}]
    \label{def:dp}
    For some $\epsilon \geq 0$, denoting the acceptable margin of error, a policy $\pi$ satisfies demographic parity fairness at state $s$ if:
    $$
   \left| J^\pi(s, g_i) - J^\pi(s, g_j) \right|  \leq \epsilon, \quad \forall g_i, g_j \in \mathcal{G}. 
    $$
\end{definition}
The demographic parity notion aims to prevent disparate impact, where one group experiences significantly different outcomes than another. As an example, we can consider a credit scoring model that provides similar approval rates for different racial, gender, or socioeconomic groups. We refer to \citet{satija2023group} for a detailed discussion on the applicability and limitations of \cref{def:dp}.

\section{Bisimulation Metrics for Long-Term Fairness in RL}
\label{sec:methods}
Our overarching goal is to develop a method that allows unconstrained policy optimization to inherently satisfy the fairness constraint. Rather than imposing the demographic parity constraint of \cref{def:dp} or other fairness measures during policy optimization, we aim to adjust the reward and observation dynamics of the MDP guided by the bisimulation metric. To achieve this, we first establish the connection between bisimulation metrics and the demographic parity fairness in RL.

Our objective is to make the group-conditioned MDP from \cref{def:group_mdp} behave as closely as possible for each group under a group-conditioned behavioral policy $\pi(a|s, g)$ over a long-term period. The $\pi$-bisimulation relation (\cref{def:pi_bisim}) is a natural fit for this goal as it essentially captures the behavioral similarity induced by a given policy. To that end, we develop a conditional form of the $\pi$-bisimulation relation \citep{castro2020scalable} that takes the sensitive attributes into account:
\begin{definition}[Group-conditioned $\pi$-Bisimulation]
    A \emph{group-conditioned $\pi$-bisimulation relation} on an MDP $\mathcal{M}_\text{group}$ is an equivalence relation $B^\pi_{\text{group}} \subseteq \mathcal{S} \times \mathcal{G} \rightarrow \mathcal{S} \times \mathcal{G}$ such that if $(s_i, g_i)B^{\pi}_{\text{group}}(s_j, g_j)$ holds for $(s_i, g_i), (s_j, g_j) \in \mathcal{S} \times \mathcal{G}$, then the following properties are true:     
    $$
        R^\pi(s_i, g_i) = R^\pi(s_j , g_j) \quad \text{and} \quad
        \tau^\pi(C | s_i, g_i) = \tau^\pi(C | s_j, g_j), \quad \forall C \in \mathcal{S}_{B^\pi_{\text{group}}}
    $$    
    where $R^\pi(s, g) \!=\! \sum_{a \in \mathcal{A}} \pi(a|s, g) R(s, a, g)$, $\tau^\pi(C|s, g) \!=\! \sum_{a \in \mathcal{A}} \pi(a | s, g) \sum_{s' \in C} \tau_a(s'|s, g)$, and $\mathcal{S}_{B^\pi_{\text{group}}}$ is the partition of equivalence classes on the Cartesian product $\mathcal{S} \times \mathcal{G}$ defined by $B^\pi_{\text{group}}$. 
    \label{def:group_pi_bisim}
\end{definition}
Building on definitions of \citet{castro2020scalable}, we extend the operator $\mathcal{F}^\pi$ to a group-conditional variant: 
\begin{equation}
    \mathcal{F}^\pi_{\text{group}}(d) (s_i, g_i), (s_j, g_j) \!=\! |R^\pi(s_i, g_i) \!-\! R^\pi(s_j, g_j) | + \gamma W_1(d)(\tau^\pi (s_i'|s_i, g_i), \tau^\pi(s_j'|s_j, g_j))
    \label{eq:group_bisim}
\end{equation}
\begin{restatable}{thm}{bisimmetric}
    \label{thm:group_bisim_metric}
    $\mathcal{F}^\pi_{\text{group}}$ as defined in \cref{eq:group_bisim} has a least fixed point $d^\pi_{\text{group} \sim}$, and $d^\pi_{\text{group} \sim}$ is a group-conditioned $\pi$-bisimulation metric. 
\end{restatable}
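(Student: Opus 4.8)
The plan is to reduce \cref{thm:group_bisim_metric} to the corresponding result for ordinary $\pi$-bisimulation metrics \citep{castro2020scalable, ferns2011bisimulation} by viewing the group-conditioned MDP as a standard MDP over an augmented state space, and then to record the two structural facts about $\mathcal{F}^\pi_{\text{group}}$ that drive the fixed-point argument: it preserves pseudometrics and it is monotone, so it is a monotone self-map of a complete lattice.

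First I would define the augmented MDP $\widetilde{\mathcal{M}} = (\widetilde{\mathcal{S}}, \mathcal{A}, \widetilde{\tau}, \widetilde{R}, \gamma)$ with $\widetilde{\mathcal{S}} = \mathcal{S} \times \mathcal{G}$, $\widetilde{R}((s,g),a) = R(s,a,g)$, and $\widetilde{\tau}_a((s',g') \mid (s,g)) = \tau_a(s' \mid s, g)\,\mathbf{1}[g'=g]$ --- here \cref{ass:consistent_attributes} is exactly what licenses holding $g$ fixed across transitions --- together with the lifted policy $\widetilde{\pi}(a \mid (s,g)) = \pi(a \mid s,g)$. A direct computation shows the policy-induced reward and kernel of $\widetilde{\mathcal{M}}$ under $\widetilde{\pi}$ are precisely $R^\pi(s,g)$ and $\tau^\pi(\cdot\mid s,g)$ from \cref{def:group_pi_bisim}, so $\mathcal{F}^\pi_{\text{group}}$ of \cref{eq:group_bisim} is literally the operator $\mathcal{F}^{\widetilde{\pi}}$ of \cref{eq:pi_bisim_metric} on $\widetilde{\mathcal{S}}$, and a group-conditioned $\pi$-bisimulation relation on $\mathcal{M}_{\text{group}}$ is the same object as a $\widetilde{\pi}$-bisimulation relation on $\widetilde{\mathcal{M}}$. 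With this identification the theorem becomes the \citet{castro2020scalable} fixed-point result applied to $\widetilde{\mathcal{M}}$; for completeness I would still record the two key steps below.

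For the fixed-point existence, I would verify directly on \cref{eq:group_bisim} that (i) $\mathcal{F}^\pi_{\text{group}}$ maps pseudometrics to pseudometrics: $\mathcal{F}^\pi_{\text{group}}(d)$ is nonnegative, symmetric, and vanishes on the diagonal, and its triangle inequality follows by adding the triangle inequalities for $|R^\pi(\cdot)-R^\pi(\cdot)|$ and for $W_1(d)$ (the latter valid since $d$ is a pseudometric); and (ii) $\mathcal{F}^\pi_{\text{group}}$ is monotone in the pointwise order, because $d \preceq d'$ implies $W_1(d) \le W_1(d')$ between any pair of laws (every coupling is scored no larger under the smaller ground metric). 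Since the pseudometrics on $\widetilde{\mathcal{S}}$ form a complete lattice under the pointwise order, Knaster--Tarski gives a least fixed point $d^\pi_{\text{group}\sim}$; when $\gamma<1$ the operator is in addition a $\gamma$-contraction in the sup-metric, which recovers uniqueness, but for $\gamma=1$ only the least fixed point is guaranteed, matching the statement.

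The remaining work is to certify that $d^\pi_{\text{group}\sim}$ deserves to be called \emph{the} group-conditioned $\pi$-bisimulation metric, i.e.\ that its kernel $B_0 = \{\,(x,y) : d^\pi_{\text{group}\sim}(x,y)=0\,\}$ is the largest group-conditioned $\pi$-bisimulation relation. Reflexivity, symmetry and transitivity of $B_0$ are immediate from the pseudometric axioms. That $B_0$ is itself a group-conditioned $\pi$-bisimulation: evaluating the fixed-point identity on a pair of $B_0$ forces $R^\pi$ to agree and $W_1(d^\pi_{\text{group}\sim})$ between the two successor laws to be zero, and a vanishing $W_1$ distance under $d^\pi_{\text{group}\sim}$ forces the two laws to assign equal mass to every $B_0$-class. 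Conversely, for an arbitrary group-conditioned $\pi$-bisimulation $B$ I would run the Kleene iteration from the bottom element, $d_0\equiv 0$ and $d_{n+1}=\mathcal{F}^\pi_{\text{group}}(d_n)$ (whose supremum is $d^\pi_{\text{group}\sim}$ by $\omega$-continuity of the operator), and show by induction that $d_n\equiv 0$ on $B$ --- the inductive step using that $B$-related successor laws admit a coupling supported on $B$, on which the cost under $d_n$ is $0$ --- hence $d^\pi_{\text{group}\sim}=0$ on $B$ and $B\subseteq B_0$. I expect the main obstacle to be not the fixed-point existence, which is routine lattice theory, but this kernel characterization, and specifically the step ``$W_1(d)(\mu,\nu)=0 \iff \mu,\nu$ agree on the partition induced by $d$'': this is where measurability/topological regularity of $\widetilde{\mathcal{S}}$ is actually used, and it should be invoked exactly as in \citet{ferns2011bisimulation, castro2020scalable}.
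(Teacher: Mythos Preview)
Your proposal is correct and follows essentially the same approach as the paper: both reduce to \citet{castro2020scalable} by constructing an augmented MDP on $\mathcal{S}\times\mathcal{G}$ with deterministic group transitions (licensed by \cref{ass:consistent_attributes}) and identifying $\mathcal{F}^\pi_{\text{group}}$ with the ordinary $\mathcal{F}^{\widetilde{\pi}}$ on that MDP. You additionally spell out the pseudometric-preservation, monotonicity, and kernel-characterization steps that the paper simply defers to \citet{castro2020scalable, ferns2011bisimulation}, which is strictly more detail than the paper provides.
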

The proof is in \cref{app:bisim} and consists of a reduction to the definitions of \citet{castro2020scalable}. The key idea allowing us to perform a reduction is that the sensitive attributes $g \in \mathcal{G}$ remain constant and have deterministic transitions. Similar to our work, the conditional form of $\pi$-bisimulation metrics has also been explored by \citet{hansen2022bisimulation} in the context of goal-conditioned RL. \citet{hansen2022bisimulation} used bisimulation for goal inference for robotic manipulation tasks. Here, we are defining the conditional form based on the sensitive attribute space which is not a subset of the state space, unlike the goal space in goal-conditioned RL. 
\begin{restatable}{thm}{bisimbound}
    \label{thm:group_bisim_bound}
    For any two state-group pairs:
    \begin{equation}
        \label{eq:group_bisim_bound}
        \left| V^\pi(s_i, g_i) - V^\pi(s_j, g_j) \right| \leq d^\pi_{\text{group} \sim}((s_i, g_i), (s_j, g_j))
    \end{equation}    
\end{restatable}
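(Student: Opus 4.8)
The plan is to deduce \cref{eq:group_bisim_bound} from the $\pi$-bisimulation value bound of \citet{castro2020scalable} (\cref{eq:pi_bisim_bound}) by viewing the group-conditioned MDP as an ordinary MDP on the augmented state space $\tilde{\mathcal S} = \mathcal S \x \mathcal G$. I would set $\tilde{\mathcal M} = (\tilde{\mathcal S}, \mathcal A, \tilde\tau_a, \tilde R, \gamma)$ with $\tilde R((s,g),a) = R(s,a,g)$ and $\tilde\tau_a((s',g')\mid(s,g)) = \tau_a(s'\mid s,g)$ when $g'=g$ and $0$ otherwise, together with the lifted policy $\tilde\pi(a\mid(s,g)) = \pi(a\mid s,g)$. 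Because group membership is constant and transitions deterministically in the $\mathcal G$-coordinate (the same observation used to prove \cref{thm:group_bisim_metric}), the policy-induced reward and transition kernel of $\tilde{\mathcal M}$ satisfy $\tilde R^{\tilde\pi}((s,g)) = R^\pi(s,g)$ and $\tilde\tau^{\tilde\pi}(\cdot\mid(s,g)) = \tau^\pi(\cdot\mid s,g)$ supported on the slice $\mathcal S \x \{g\}$; matching the Bellman equations (and using uniqueness of the Bellman fixed point) then gives $\tilde V^{\tilde\pi}((s,g)) = V^\pi(s,g)$.

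Next I would verify that the operator $\mathcal F^{\tilde\pi}$ of \citet{castro2020scalable} (\cref{eq:pi_bisim_metric} instantiated on $\tilde{\mathcal M}$) coincides with $\mathcal F^\pi_{\text{group}}$ from \cref{eq:group_bisim}. The reward terms agree by construction, and for the transport terms I would check that $W_1(d)$ between $\tilde\tau^{\tilde\pi}(\cdot\mid(s_i,g_i))$ and $\tilde\tau^{\tilde\pi}(\cdot\mid(s_j,g_j))$, computed over $\tilde{\mathcal S}$ with the pseudometric $d$, equals the transport term in \cref{eq:group_bisim}: since these distributions are concentrated on the slices $\mathcal S \x \{g_i\}$ and $\mathcal S \x \{g_j\}$, every admissible coupling is in bijection with a coupling of $\tau^\pi(\cdot\mid s_i,g_i)$ and $\tau^\pi(\cdot\mid s_j,g_j)$ on $\mathcal S \x \mathcal S$ carrying the identical cost $d((s',g_i),(s'',g_j))$. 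Hence $\mathcal F^{\tilde\pi} = \mathcal F^\pi_{\text{group}}$, so their least fixed points coincide, $d^{\tilde\pi}_\sim = d^\pi_{\text{group}\sim}$, and applying \cref{eq:pi_bisim_bound} to $\tilde{\mathcal M}$ and substituting the identifications above yields exactly \cref{eq:group_bisim_bound}.

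As a self-contained alternative that bypasses the reduction, I would argue by induction on the value-iteration iterates $V^\pi_{n+1}(s,g) = R^\pi(s,g) + \gamma \sum_{s'} \tau^\pi(s'\mid s,g)\, V^\pi_n(s',g)$ with $V^\pi_0 \equiv 0$, and the metric iterates $d_{n+1} = \mathcal F^\pi_{\text{group}}(d_n)$ with $d_0 \equiv 0$ (monotone, hence $d_n \uparrow d^\pi_{\text{group}\sim}$): the hypothesis $|V^\pi_n(s_i,g_i) - V^\pi_n(s_j,g_j)| \le d_n((s_i,g_i),(s_j,g_j))$ for all pairs says $V^\pi_n$ is $1$-Lipschitz with respect to $d_n$, so Kantorovich--Rubinstein duality bounds $|\sum_{s'}\tau^\pi(s'\mid s_i,g_i)V^\pi_n(s',g_i) - \sum_{s'}\tau^\pi(s'\mid s_j,g_j)V^\pi_n(s',g_j)|$ by the corresponding Wasserstein term; adding the reward gap recovers the bound with $d_{n+1}$, and letting $n\to\infty$ (using $V^\pi_n \to V^\pi$ and $d_n \uparrow d^\pi_{\text{group}\sim}$) finishes the proof.

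The main obstacle is the Wasserstein bookkeeping in the reduction: one must confirm that enlarging the ground set from $\mathcal S$ to $\mathcal S \x \mathcal G$ leaves the optimal transport cost between group-conditioned next-state distributions unchanged, i.e.\ that restricting couplings to respect the deterministic group coordinate is without loss of optimality. Everything else --- the Bellman-equation matching, the fixed-point identification, and the Lipschitz/duality step --- is routine and parallels \citet{castro2020scalable}.
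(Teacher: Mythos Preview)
Your proposal is correct and follows essentially the same approach as the paper: the paper's proof also constructs the augmented MDP on $\overline{\mathcal{S}} = \mathcal{S}\times\mathcal{G}$ (with the deterministic group coordinate) and then invokes Castro's Theorem~3, noting that the remainder ``follows using induction on the standard value update.'' You supply considerably more detail than the paper does (the Bellman matching, the coupling bijection for the Wasserstein term, and the explicit Kantorovich--Rubinstein induction), but the core reduction is identical.
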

The proof is in \cref{app:bisim} and follows the same logic as for \cref{thm:group_bisim_metric}. By comparing the result of \cref{thm:group_bisim_bound} with the demographic fairness from \cref{def:dp}, we derive the following result:
\begin{restatable}{thm}{bisimfair}
    \label{prop:group_bisim_dp}
    Minimizing the bisimulation metric $d^\pi_{\text{group} \sim}((s_i, g_i), (s_j, g_j))$ results in demographic fairness as defined in \cref{def:dp} between the two state-group pairs. 
\end{restatable}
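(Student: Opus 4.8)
The plan is to obtain the statement as a direct specialization of \cref{thm:group_bisim_bound} to the case where the two state-group pairs share the same state. First I would recall that in the group-conditioned MDP of \cref{def:group_mdp} the per-state return $J^\pi(s,g)$ appearing in \cref{def:dp} is exactly the group-specific value function $V^\pi(s,g)$, so demographic parity at a state $s$ reads $\left| V^\pi(s, g_i) - V^\pi(s, g_j) \right| \le \epsilon$ for all $g_i, g_j \in \mathcal{G}$.

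Next I would fix a state $s \in \mathcal{S}$ and two sensitive attributes $g_i, g_j \in \mathcal{G}$, and instantiate \cref{thm:group_bisim_bound} with the pairs $(s_i, g_i) = (s, g_i)$ and $(s_j, g_j) = (s, g_j)$. This immediately gives
\[
\left| V^\pi(s, g_i) - V^\pi(s, g_j) \right| \;\le\; d^\pi_{\text{group} \sim}\big((s, g_i), (s, g_j)\big).
\]
Hence, whenever the learning procedure drives the group-conditioned $\pi$-bisimulation distance below the tolerance, i.e.\ $d^\pi_{\text{group} \sim}((s, g_i),(s, g_j)) \le \epsilon$, the right-hand side bounds the left-hand side by $\epsilon$ and \cref{def:dp} is satisfied at $s$; in particular, when the metric is minimized to $0$ on these pairs one recovers exact ($\epsilon = 0$) demographic parity. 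Since the bound holds simultaneously for every pair of groups, controlling $\max_{g_i, g_j \in \mathcal{G}} d^\pi_{\text{group} \sim}((s, g_i),(s,g_j))$ yields demographic parity at $s$ across all of $\mathcal{G}$, and controlling it uniformly over $s$ yields it everywhere.

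There is essentially no heavy machinery here — the result is a corollary of the value-function bound of \cref{thm:group_bisim_bound} — so the only real care needed is in the bookkeeping: making the identification $J^\pi(s,g) = V^\pi(s,g)$ explicit, checking that \cref{thm:group_bisim_bound} is indeed applicable to pairs with a common state (it is, since the bound is stated for arbitrary state-group pairs), and phrasing ``minimizing the metric'' as the quantitative statement that an $\epsilon$-small metric implies $\epsilon$-fairness and a zero metric implies perfect fairness. The main conceptual point worth flagging, rather than an obstacle, is that this implication is one-directional: a small bisimulation metric is sufficient for demographic parity but not necessary, because $d^\pi_{\text{group} \sim}$ also penalizes differences in transition behavior that need not manifest as value gaps.
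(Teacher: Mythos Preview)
Your proposal is correct and follows essentially the same route as the paper: both arguments are immediate corollaries of \cref{thm:group_bisim_bound}, using that bound to control the value-function gap by the group-conditioned $\pi$-bisimulation metric and then invoking the definition of demographic parity.

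The only noteworthy difference is packaging. You identify $J^\pi(s,g)$ with $V^\pi(s,g)$ and instantiate \cref{thm:group_bisim_bound} at a fixed state $s$ (same state, two groups), which matches the literal phrasing of \cref{def:dp}. The paper instead keeps arbitrary state-group pairs $(s_i,g_i),(s_j,g_j)$, writes $J^\pi$ as an expectation over the stationary distribution $\rho(s,g)$, pushes the absolute value inside via $|\mathbb{E}[\cdot]|\le\mathbb{E}[|\cdot|]$, and only then applies \cref{thm:group_bisim_bound} under the expectation. The payoff of the paper's extra step is that the resulting bound $\mathbb{E}_{\rho(s,g)}\big[d^\pi_{\text{group}\sim}((s_i,g_i),(s_j,g_j))\big]$ is exactly the objective $J$ of \cref{eq:fair_bisim_loss}, so their proof doubles as a justification for the practical loss they minimize. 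Your pointwise argument is cleaner for the theorem as stated; theirs is tailored to motivate the algorithm.
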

The proof is in \cref{app:fair_bisim} and is based on the convergence guarantees of the $\pi$-bisimulation metric. To achieve group fairness, we propose to reduce the group-conditioned $\pi$-bisimulation metric between state-group pairs for different groups in expectation over the stationary state distribution induced by the behavioral policy $\pi(a|s, g)$ by adjusting the reward function $J_\text{rew.}$ and observation dynamics $J_\text{dyn.}$. More formally, we propose to minimize: 
\begin{equation}
    J = \E_{\rho^\pi (s, g)} \Big[  \underbrace{|R^\pi(s_i, g_i) - R^\pi(s_j, g_j) | }_{J_\text{rew.}} + \underbrace{ \gamma W_1(d^\pi_{\text{group} \sim})(\tau^\pi (s_i'|s_i, g_i), \tau^\pi(s_j'|s_j, g_j))}_{J_\text{dyn.}}  \Big]
    \label{eq:fair_bisim_loss}
\end{equation}
where $\rho^\pi(s, g)$ is the stationary state-group distribution under the policy $\pi$. Notably, we use quantile matching to select state pairs from the group distributions. Quantile matching is a well-known statistical technique to map quantiles of two or more different populations for statistical analysis \citep{mckay1979comparison}. \rebuttal{
In this context, we compare samples from corresponding quartiles of the population across different groups. This approach is essential because, in many cases, the state distributions of the groups may have little to no overlap.} As we can split the expectation of \cref{eq:fair_bisim_loss} into two terms $J = J_\text{rew.} + J_\text{dyn.}$, in subsequent sections, we outline practical algorithms for \rebuttal{optimization} of each term alongside the policy optimization. 

\subsection{Bisimulation-Driven \rebuttal{Optimization} of the Reward Function}
\label{sec:adjust_rew}
We first describe our approach for \rebuttal{optimization} of the reward function by minimizing $J_\text{rew.}$:
\begin{equation}
    J_\text{rew.} = \E_{s_i, s_j, g_i, g_j \sim \rho^\pi (s, g)} \left[ |R^\pi(s_i, g_i) - R^\pi(s_j, g_j) |  \right]
    \label{eq:loss_reward}
\end{equation}
This approach is closely related to bi-level optimization methods for reward shaping \citep{hu2020learning}, however, the novelty of our method is that the reward shaping procedure is guided by the $\pi$-bisimulation metric. We assume the reward function $R(s, a, g)$ consists of the following terms:
\begin{equation}
    \label{eq:bisim_rew_correction}
    R(s, a, g) = R^{\text{original}}(s, a) + \alpha  R_\phi^{\text{correction}}(s, a , g)
\end{equation}
where the first term is defined by the original MDP and is fixed; besides, this reward term is often not conditioned on the group membership. The second term is a learnable group-conditioned function, parameterized by $\phi$, that is used as a correction for the original reward, and $\alpha$ is a scalar weight.
% Notably, this decomposition of the reward function resembles that of the intrinsic motivation \citep{chentanez2004intrinsically}. However, the goal of intrinsic reward is generally to improve exploration or maximize the original reward, whereas our aim is to use the correction term to achieve group fairness, which in fact may decrease the original reward. 

Since modifying the reward function during the RL training \rebuttal{may result in instability}, our method learns the reward correction term outside the \rebuttal{policy optimization} loop. We take a sampling-based approach for minimizing $J_\text{rew.}$; first, we collect a dataset of trajectories using the policy $\pi$, then we use \cref{eq:loss_reward} to estimate the discrepancy between the reward functions among different state-group pairs using quantile matching. Consequently, we optimize the estimated loss with respect to the learnable reward parameters $\phi$ using a gradient-based optimizer.

\subsection{Bisimulation-Driven \rebuttal{Optimization} of the Observation Dynamics}
\label{sec:adjust_dyn}
We now describe our approach for \rebuttal{optimization} of the observation dynamics by minimizing $J_\text{dyn.}$:
\begin{equation}
    J_\text{dyn.} = \E_{s_i, s_j, g_i, g_j \sim \rho^\pi (s, g)} \left[ \gamma W_1(d^\pi_{\text{group} \sim})(\tau^\pi(s_i'|s_i, g_i), \tau^\pi(s_j'|s_j, g_j))  \right]
    \label{eq:loss_dynamics}
\end{equation}
Critically, these modifications are carried out by the agent and only affect the observation space, leaving the underlying dynamics of the environment unchanged. In this approach, we assume that the observation dynamics has modifiable parameters $\omega$, \rebuttal{examples of which are provided in \cref{sec:experiments}}. Notably, many real-world problems allow these types of modifications to the observations; for instance, a bank can consider to override the credit score of a loan applicant under certain circumstances \citep{fdicFair}. Similarly to \cref{sec:adjust_rew}, we take a sampling-based approach for minimizing $J_\text{dyn.}$ while ensuring the \rebuttal{stability of training}. First, we collect a dataset of trajectories using the policy $\pi$, then we train a group-conditioned dynamics model $\mathcal{T}_\psi(s'|s, a, g)$ that outputs a normal distribution over the next state. For an efficient method of evaluating the Kantorovich metric in \cref{eq:loss_dynamics}, we follow \citet{zhang2020learning} and substitute the distance measure with 2-Wasserstein ($W_2$) which has an analytical solution for normal distributions: 
\begin{equation}
    W_2 \big({\mathcal{N}} (\mu_1, \sigma_1), \mathcal{N} (\mu_2, \sigma_2)\big)^2  =  \|\mu_1 -  \mu_2 \|_2^2  +  \|\sigma_1^{\frac{1}{2}}   - \sigma_2^{\frac{1}{2}} \|_{F}^2
    \label{eq:w_normal}
\end{equation}
where $\mathcal{N}(\mu, \sigma)$ is a normal distribution, and $\|  \cdot  \|_{F}$ is the Frobenius norm. Since $J_\text{dyn.}$ is not differentiable with respect to the adjustable parameters $\omega$ in the MDP observation dynamics, we use gradient-free optimization methods to minimize this loss function. Note that unlike \cref{sec:adjust_rew}, we need to recollect the dataset of trajectories when the observation dynamics is modified. 

\begin{algorithm}[t!]
    \caption{Bisimulator: \rebuttal{Optimization} of the Reward Function and Observation Dynamics}
    \label{alg:bisimulator}
    \small
    \textbf{Inputs}: Reward optimization steps $M$, dynamics optimization steps $N$, learning steps $K$, and scalar weight $\alpha$.
    % group-conditioned policy $\pi_\theta(a|s, g)$, group-conditioned dynamics model $\mathcal{T}_\psi(s'|s, a, g)$,
\begin{algorithmic}[1]
    \State Initialize policy $\pi_\theta(a | s, g)$, dynamics model $\mathcal{T}_\psi(s_i'|s_i, a_i, g_i)$, and reward function $R_\phi(s, a, g)$.
    \While{not done}
        % \State \# Optimize the reward function
        \State Collect dataset $\D$ of trajectories using $\pi_\theta(a|s, g)$ \rebuttal{and the environment}
        \For{optimization iteration $m = 1$ to $M$} \Comment{Optimize the \rebuttal{learnable} reward function \rebuttal{$R_\phi(s, a, g)$}}
            \State Estimate $J_\text{rew} \approx \E_{\D} \left[ | R_{\text{orig.}}(s_i, a_i) + \alpha R_{\phi}(s_i, a_i, g_i) - R_{\text{orig.}}(s_j, a_j) - \alpha R_{\phi}(s_j, a_j, g_j) | \right]$
            \State $\phi \leftarrow \argmin J_\text{rew.}$ \Comment{Gradient-based optimization}
        \EndFor
        % \State \# Optimize the observation dynamics
        \For{optimization iteration $n = 1$ to $N$} \Comment{Optimize \rebuttal{parameters $\omega$} of the observation dynamics}
            \State Collect dataset $\D$ of trajectories using $\pi_\theta$ 
            \State Train the dynamics model $\mathcal{T}_\psi(s'|s, a, g)$ using samples from $\D$
            \State Estimate $J_\text{dyn.} \approx \E_{\D} \left[\gamma W_2 (\mathcal{T}_\psi(s_i'|s_i, a_i, g_i), \mathcal{T}_\psi (s_j'|s_j, a_j, g_j)) \right]$ \Comment{\cref{eq:w_normal}}
            \State $\omega \leftarrow \argmin J_\text{dyn.}$ \Comment{Gradient-free optimization}
        \EndFor
        % \State \# Optimize the policy
        \For{learning iteration $k = 1$ to $K$} \Comment{Optimize the policy}
            \State Update policy $\pi_\theta(a|s, g)$ using an RL algorithm 
        \EndFor
        \EndWhile
\end{algorithmic}
\end{algorithm}

\subsection{Bisimulator: \rebuttal{O
ptimization} of the Reward Function and Observation Dynamics}
We can combine the algorithms outlined in \cref{sec:adjust_rew,sec:adjust_dyn} to simultaneously \rebuttal{optimize} the reward function and observation dynamics of a given group-conditioned MDP so that its behaves $\pi$-bisimilarly for all groups, with the ultimate goal of achieving demographic fairness. The pseudo-code of our proposed method, referred to as the \emph{Bisimulator}, is described in \cref{alg:bisimulator}. We can use any RL algorithm as the RL solver (L15), and we experiment with PPO \citep{schulman2017proximal} and DQN \citep{mnih2015human}. We utilize Adam \citep{kingma2014adam} as the gradient-based optimizer of $J_\text{rew.}$ (L6), and use One-Plus-One \citep{juels1995stochastic, droste2002analysis} as the gradient-free optimizer of $J_\text{dyn.}$ (L12). Additional implementation details are in \cref{app:implementation}.

\section{Experimental Results}
\label{sec:experiments}
Our experimental setup consists of sequential problems where fair decision making is crucial. We have utilized and extended a standard and well-established benchmark in this domain \citep{d2020fairness}. Our aim is to showcase the versatility and applicability of our method, regardless of the specific fairness measures used, and importantly, without explicitly imposing those constraints.

As modifying the observation dynamics may not be feasible in certain real-world applications, we evaluate two variants of our method: the standard variant that \rebuttal{optimizes} both the reward and observation dynamics \emph{(Bisimulator)}, and the variant that only \rebuttal{optimizes} the reward \emph{(Bisimulator - Reward only)}. Furthermore, to showcase the versatility of our method across various RL algorithms, we apply Bisimulator to PPO \citep{schulman2017proximal} and DQN \citep{mnih2015human}. All results are obtained on \emph{10 seeds} and \emph{5 evaluation episodes} per seed. Notably, we conducted grid search to tune the hyperparameters of all baselines, leading to an improvement over their original implementations.

\subsection{Case Study: Lending}
\label{sec:lending}
In this scenario, introduced by \citet{liu2018delayed}, an agent representing the bank makes binary decisions on loan applications aimed at maximizing profit. The challenge is that these decisions result in changes in the population and their credit scores. Thus, even policies constrained to fairness measures at each time step can inadvertently increase the credit gap over a long-term horizon. 

\textbf{Environment.}
Each applicant has an observable group membership and a discrete credit score sampled from unequal group-specific initial distributions. At each time step, an applicant is sampled from the population, and the agent decides to accept or reject the loan. Successful repayment raises the applicant's credit score, benefiting the agent financially. Defaulting, however, reduces the credit score and the agent's profit. The probability of repayment in \citet{liu2018delayed, d2020fairness} is a deterministic function of the applicant's credit score, however, this oversimplifies the actual dynamics of the problem\footnote{A common counterexample is the population that is assigned a low credit score due to limited credit history, rather than their true likelihood of loan repayment.}
Therefore, we extend upon this by adding a latent variable representing the applicant's conscientiousness for repayment, regardless of their credit score. 
% In addition to being more realistic, this extension makes the environment partially observable and thereby more challenging.
In both cases, an episode spans 10,000 steps and involves two groups, with the second group facing a disadvantage.
% (refer to Figure \ref{fig:lending_init_credit}).

\begin{figure}[t!]
    \centering   
    \begin{minipage}[t]{0.025\textwidth}
        \centering
        \begin{turn}{90}
            \textbf{\qquad Credit Only}
        \end{turn}
    \end{minipage}
    \begin{subfigure}[b]{0.235\textwidth}
         \centering
         \includegraphics[width=\textwidth]{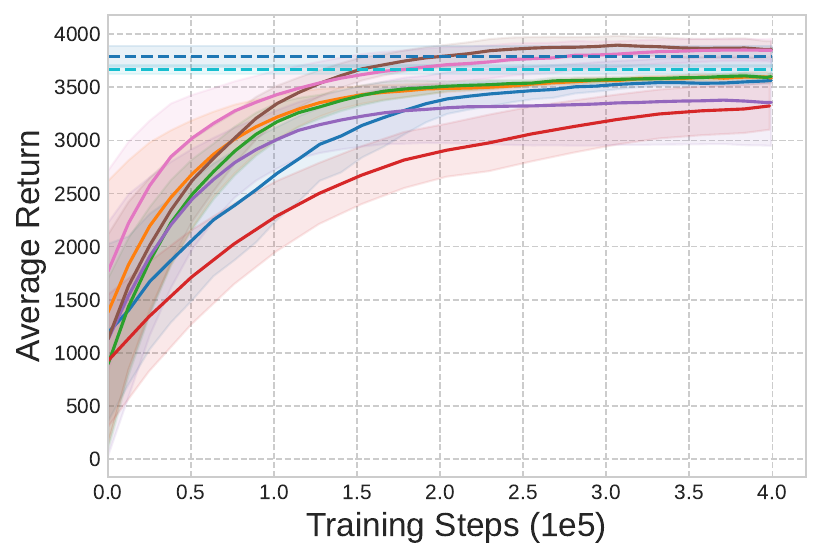}
         \caption{Average return.}
         \label{fig:lending_credit_avg_return}
    \end{subfigure}
    \hfill
    \begin{subfigure}[b]{0.235\textwidth}
         \centering
         \includegraphics[width=\textwidth]{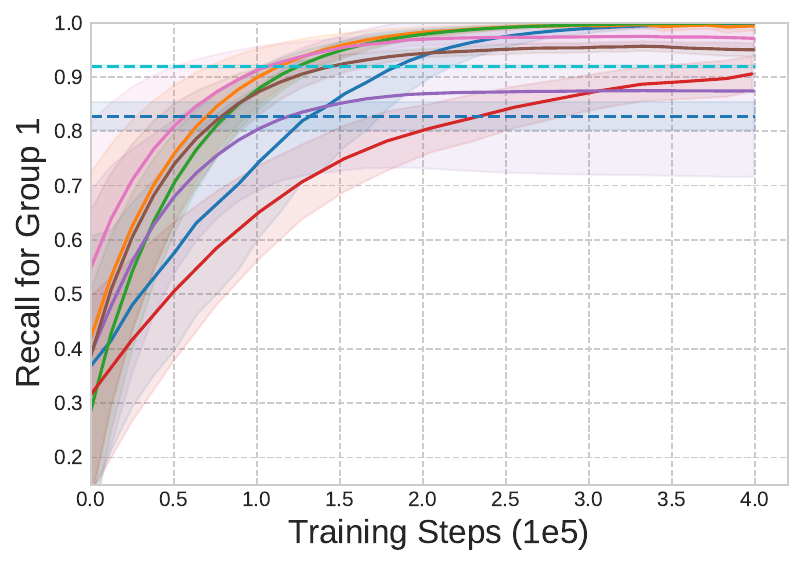}
         \caption{Recall, group 1.}
         \label{fig:lending_credit_recall_g1}
    \end{subfigure}
    \hfill
    \begin{subfigure}[b]{0.235\textwidth}
         \centering
         \includegraphics[width=\textwidth]{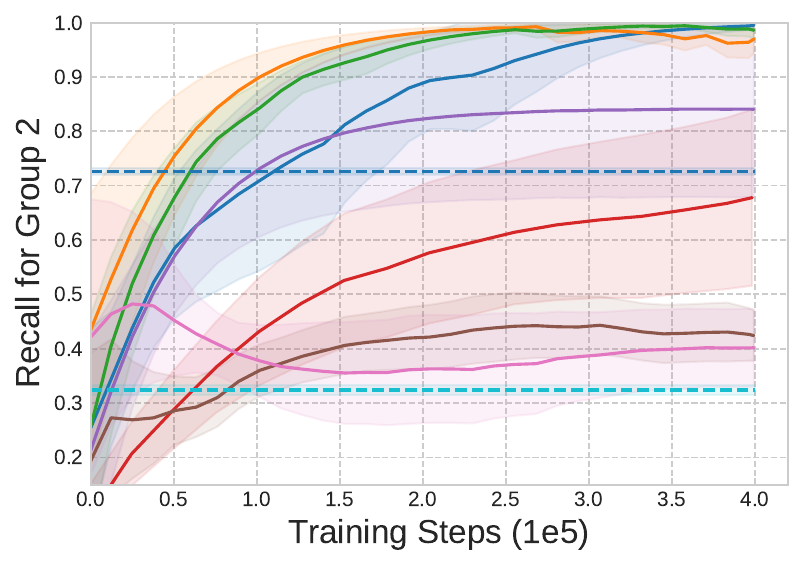}
         \caption{Recall, group 2.}
         \label{fig:lending_credit_recall_g2}
    \end{subfigure}
    \hfill
    \begin{subfigure}[b]{0.235\textwidth}
         \centering
         \includegraphics[width=\textwidth]{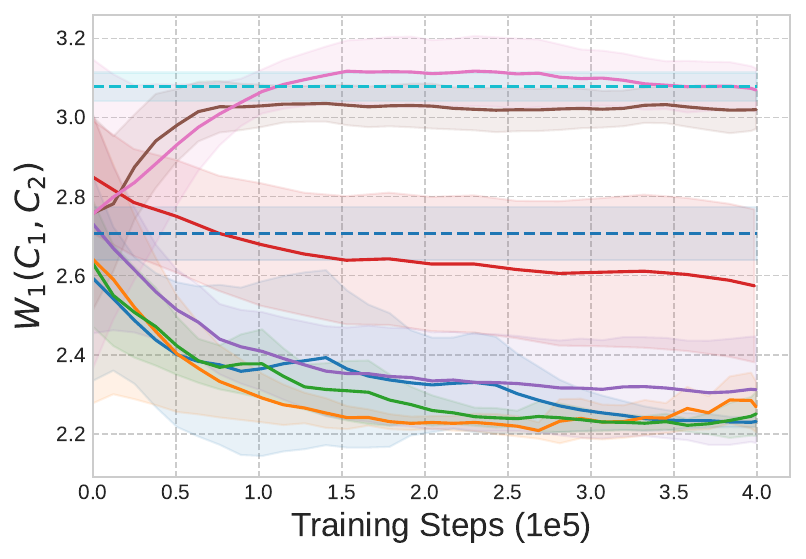}
         \caption{Credit gap.}
         \label{fig:lending_credit_credit_gap}
    \end{subfigure}    

    \vspace{0.2em}
    \begin{minipage}[t]{0.025\textwidth}
        \centering
        \begin{turn}{90}
            \textbf{\qquad Credit + Cons.}
        \end{turn}
    \end{minipage}
    \begin{subfigure}[b]{0.235\textwidth}
         \centering
         \includegraphics[width=\textwidth]{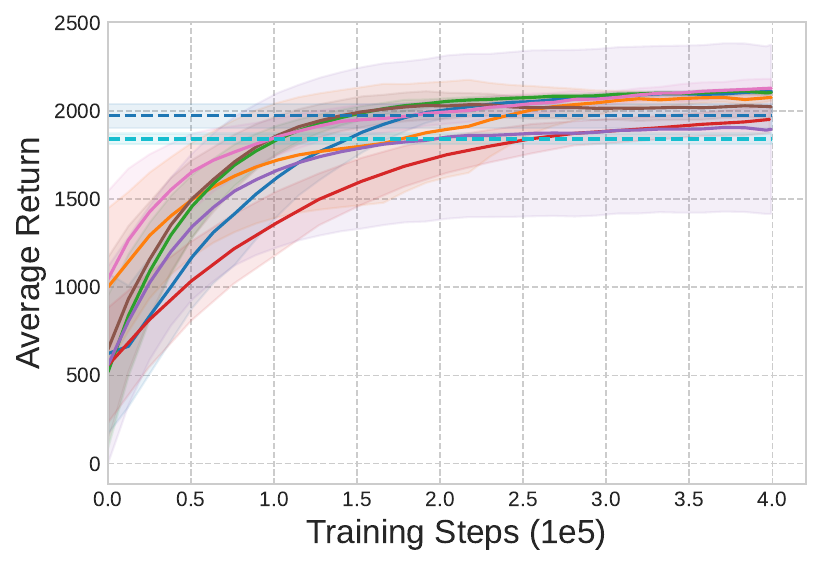}
         \caption{Average return.}
         \label{fig:lending_credit_cons_avg_return}
    \end{subfigure}
    \hfill
    \begin{subfigure}[b]{0.235\textwidth}
         \centering
         \includegraphics[width=\textwidth]{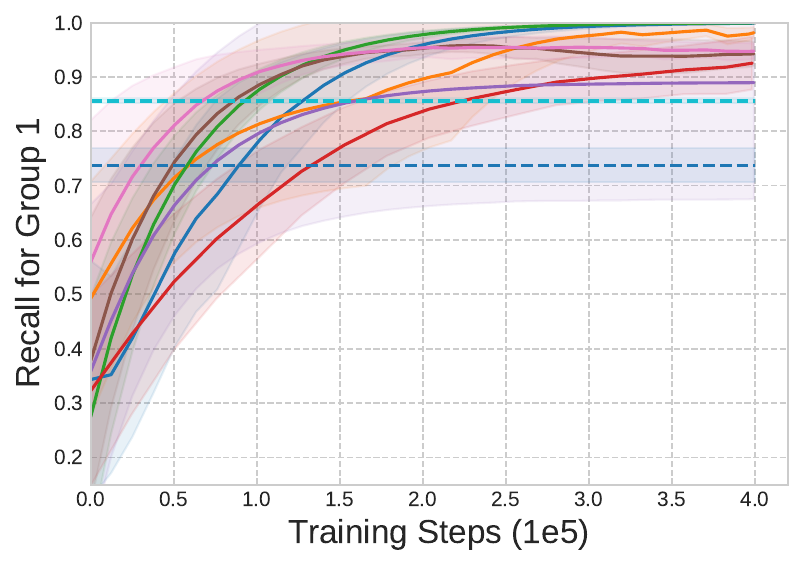}
         \caption{Recall, group 1.}
         \label{fig:lending_credit_cons_recall_g1}
    \end{subfigure}
    \hfill
    \begin{subfigure}[b]{0.235\textwidth}
         \centering
         \includegraphics[width=\textwidth]{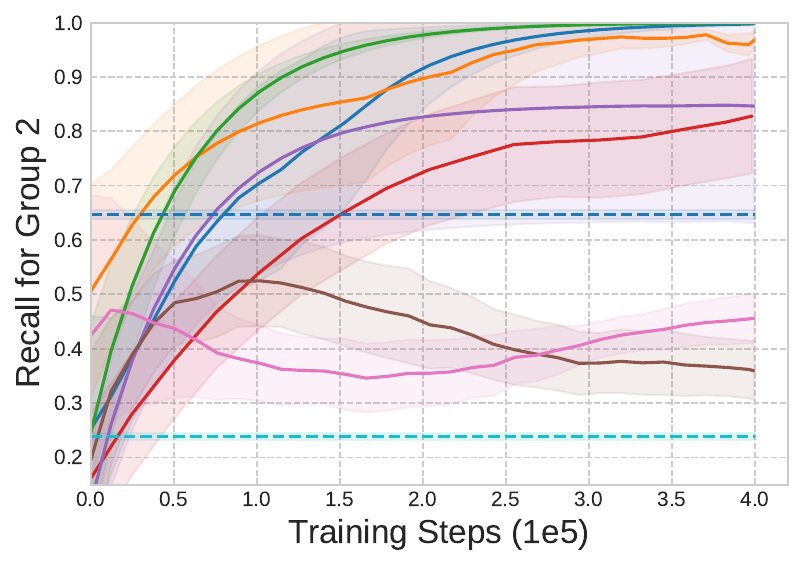}
         \caption{Recall, group 2.}
         \label{fig:lending_credit_cons_recall_g2}
    \end{subfigure}
    \hfill
    \begin{subfigure}[b]{0.235\textwidth}
         \centering
         \includegraphics[width=\textwidth]{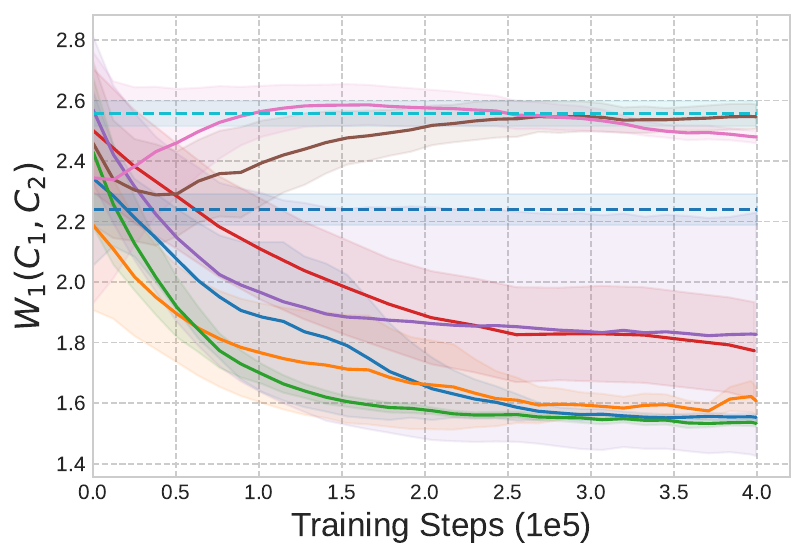}
         \caption{Credit gap.}
         \label{fig:lending_credit_cons_credit_gap}
    \end{subfigure}    

    \vspace{4pt}
    \fcolorbox{gray}{white}{
\small
\begin{minipage}{0.9\textwidth}
\begin{tabular}{lllll}
    \cblock{sb_blue} PPO+Bisimulator & \cblock{sb_orange} DQN+Bisimulator & \cblock{sb_green} ELBERT-PO & \cblock{sb_red} Lag-PPO & \cblock{sb_purple} A-PPO \\
    \cblock{sb_brown} PPO            & \cblock{sb_pink} DQN               & \cdashed{sb_cyan} Max-util  & \cdashed{sb_blue} EO    &                         
\end{tabular}
\end{minipage}
}

    \caption{Lending results. The first row \textbf{(a-d)} shows the lending scenario where the repayment probability is only a function of the credit score, while the second row \textbf{(e-f)} presents the case where the repayment probability is a function of the credit score and a latent conscientiousness parameter. \textbf{(a, e)} Average return. \textbf{(b, f)} Recall for group 1. \textbf{(c, g)} Recall for group 2. \textbf{(d, h)} Credit gap measured as the Kantorovich distance between the credit score distributions at the end of evaluation episodes. The shaded regions show 95\% confidence intervals and plots are smoothed for visual clarity.}
    \label{fig:lending_res}
\end{figure}

\begin{wrapfigure}{R}{0.35\textwidth}
\vspace{-2em}
\begin{center}
    \includegraphics[width=0.35\textwidth]{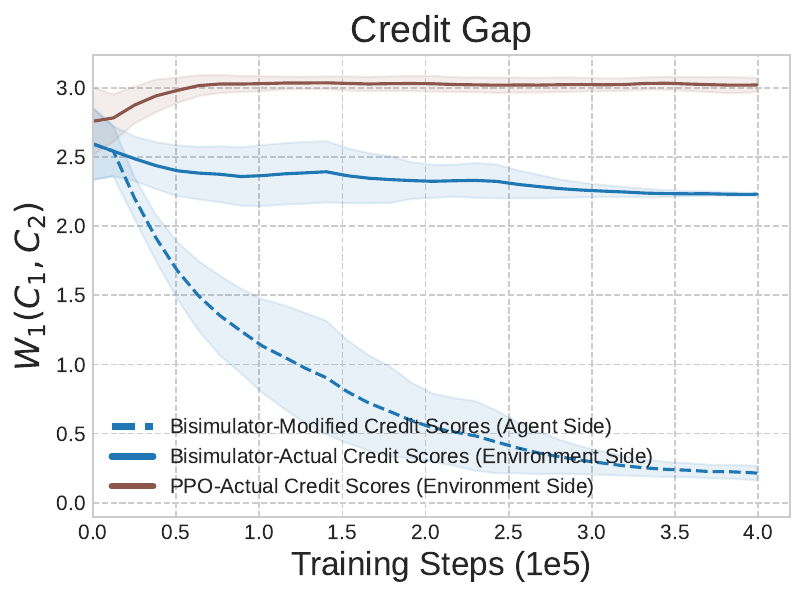}
    \caption{Credit gaps of Bisimulator and PPO. Solid lines show the gap between the actual credit scores that govern the MDP dynamics, and the dashed line shows the gap between the modified credit scores that are observed by the agent.
    }
    \label{fig:lending_observable_credit}
\end{center}
\vspace{-2em}
\end{wrapfigure}

Finally, as an example of adjustable observation dynamics, described in \cref{sec:adjust_dyn}, we utilize credit changes that depend on the applicant's group membership; for instance, applicants from the disadvantaged group may receive a higher credit increase upon loan repayment, compared to those who belong to the advantaged group. This is a realistic assumption since in practice, banks or other regulators are allowed to override credit scores during their decision making process \citep{fdicFair}. Importantly, these changes are on the agent side and only affect the observation dynamics, leaving the underlying dynamics and the probability of repayment unchanged. In other words, these modifications affect how the agent ``sees'' the world. Additional details are in \cref{app:lending_env}
% To ensure this, each applicant has an observable credit score and a latent credit score; the probability of repayment is a function of the latent credit, while the agent is only allowed to modify the dynamics associated with the observable one. 

\textbf{Fairness Metrics.} 
Similarly to \citet{d2020fairness}, we use three metrics for evaluating the long-term fairness: \textbf{(a)} changes in the credit score distributions measured by the Kantorovich distance, \textbf{(b)} the cumulative number of loans given to each group, and \textbf{(c)} agent's aggregated \emph{recall}---$tp/(tp + fn)$---for loan decisions over the entire episode horizon, that is the ratio between the number of successful loans given to the number of applicants who would have repaid a loan.

\textbf{Baselines.} 
We evaluate our method against: a classifier that maximizes profits (Max-util) \citep{liu2018delayed}, an equality of opportunity (EO) classifier that maximizes profits constrained to equalized recalls \citep{d2020fairness}, standard PPO and DQN, Lagrangian-PPO (Lag-PPO) \citep{satija2023group} that is constrained to \cref{def:dp}, Advantage-regularized PPO (A-PPO) \citep{yu2022policy} that is constrained to equalized recalls, and ELBERT-PO \citep{xu2024adapting}, a recent state-of-the-art method that is constrained to equalized benefit rates. Additional details are in \cref{app:implementation}.

\textbf{Results.} 
\cref{fig:lending_res} and \cref{tab:lending_res} present the results of the two lending scenarios. Our method effectively achieves high recall values for both groups while narrowing down the credit gap. Notably, Bisimulator proves to be equally effective with both PPO and DQN, highlighting the versatility of our approach across different RL algorithms, unlike A-PPO or ELBERT-PO that are tightly coupled with PPO due to the modifications of the advantage function with fairness constraints. As anticipated, the greedy baselines (PPO, DQN, Max-util) obtain high recall values for group 1, but they fall short in achieving similar values for the disadvantaged group. A-PPO is constrained to small recall gaps, thus it naturally achieves low recall gaps, however, its recall values and credit gap are worse than those of Bisimulator. Bisimulator is able to match or surpass ELBERT-PO, the current state-of-the-art method, demonstrating the effectiveness of our unconstrained approach in achieving long-term fairness. See \cref{app:lending_results} for cumulative loans, the recall gap, and the results for Bisimulator (Reward only).

\begin{table}[t!]
    \centering
    \caption{Lending results. Reported values are the means and 95\% confidence intervals, evaluated at the end of the training. Highlighted entries indicate the best values and any other values within 5\% of the best value.}
    \label{tab:lending_res}
    \scriptsize    
    \begin{tabular}{llccccc}
        \toprule
        {} & {} &   \textbf{Avg. Return} & \textbf{Credit Gap} & \textbf{Recall (G1)} & \textbf{Recall (G2)} &   \textbf{Recall Gap} \\
        \midrule
        \multirow{11}{*}{\rotatebox[origin=c]{90}{\textbf{Credit Only}}} & PPO + Bisimulator               &            3582.63 ± 53.71 &  \textbf{2.24 ± 0.05} &  \textbf{1.00 ± 0.00} &  \textbf{1.00 ± 0.00} &  \textbf{0.00 ± 0.00} \\
& PPO + Bisimulator (Reward only) &            3568.20 ± 37.06 &  \textbf{2.22 ± 0.04} &  \textbf{1.00 ± 0.00} &  \textbf{1.00 ± 0.00} &  \textbf{0.00 ± 0.00} \\
& DQN + Bisimulator               &            3547.02 ± 47.37 &  \textbf{2.20 ± 0.05} &  \textbf{0.99 ± 0.02} &  \textbf{0.99 ± 0.02} &           0.01 ± 0.02 \\
& DQN + Bisimulator (Reward only) &            3590.27 ± 40.53 &  \textbf{2.21 ± 0.04} &  \textbf{0.99 ± 0.01} &  \textbf{1.00 ± 0.00} &           0.01 ± 0.01 \\
& ELBERT-PO                       &           3636.42 ± 100.64 &  \textbf{2.28 ± 0.10} &  \textbf{1.00 ± 0.00} &  \textbf{0.98 ± 0.03} &           0.02 ± 0.03 \\
& Lag-PPO                         &           3439.51 ± 237.18 &           2.52 ± 0.21 &           0.94 ± 0.03 &           0.72 ± 0.18 &           0.25 ± 0.16 \\
& A-PPO                           &           3365.82 ± 433.46 &  \textbf{2.31 ± 0.15} &           0.87 ± 0.16 &           0.84 ± 0.17 &           0.06 ± 0.08 \\
& PPO                             &  \textbf{3869.42 ± 113.24} &           3.02 ± 0.05 &  \textbf{0.95 ± 0.01} &           0.42 ± 0.06 &           0.54 ± 0.06 \\
& DQN                             &  \textbf{3849.40 ± 133.92} &           3.05 ± 0.06 &  \textbf{0.97 ± 0.02} &           0.40 ± 0.07 &           0.56 ± 0.06 \\
& Max-util                        &            3670.66 ± 42.40 &           3.08 ± 0.04 &           0.92 ± 0.00 &           0.32 ± 0.01 &           0.60 ± 0.01 \\
& EO                              &   \textbf{3793.72 ± 99.53} &           2.71 ± 0.07 &           0.83 ± 0.03 &           0.73 ± 0.01 &           0.10 ± 0.03 \\
        \midrule 
        \vspace{-5pt}\\
        \multirow{11}{*}{\rotatebox[origin=c]{90}{\textbf{Credit + Cons.}}}
& PPO + Bisimulator               &  \textbf{2116.16 ± 52.13} &  \textbf{1.55 ± 0.04} &  \textbf{1.00 ± 0.00} &  \textbf{1.00 ± 0.00} &  \textbf{0.00 ± 0.00} \\
& PPO + Bisimulator (Reward only) &  \textbf{2082.24 ± 32.54} &  \textbf{1.52 ± 0.05} &  \textbf{1.00 ± 0.00} &  \textbf{1.00 ± 0.00} &  \textbf{0.00 ± 0.00} \\
& DQN + Bisimulator               &  \textbf{2085.93 ± 44.28} &  \textbf{1.55 ± 0.03} &  \textbf{0.99 ± 0.01} &  \textbf{1.00 ± 0.00} &           0.01 ± 0.01 \\
& DQN + Bisimulator (Reward only) &  \textbf{2128.07 ± 28.62} &  \textbf{1.52 ± 0.04} &  \textbf{0.99 ± 0.01} &  \textbf{1.00 ± 0.00} &           0.01 ± 0.01 \\
& ELBERT-PO                       &  \textbf{2110.56 ± 42.99} &  \textbf{1.52 ± 0.03} &  \textbf{1.00 ± 0.00} &  \textbf{1.00 ± 0.00} &  \textbf{0.00 ± 0.00} \\
& Lag-PPO                         &           2007.80 ± 90.66 &           1.70 ± 0.19 &  \textbf{0.95 ± 0.05} &           0.87 ± 0.12 &           0.15 ± 0.11 \\
& A-PPO                           &          1915.77 ± 498.95 &           1.82 ± 0.42 &           0.89 ± 0.21 &           0.84 ± 0.22 &           0.05 ± 0.10 \\
& PPO                             &           2012.98 ± 70.02 &           2.54 ± 0.05 &           0.94 ± 0.02 &           0.35 ± 0.07 &           0.60 ± 0.07 \\
& DQN                             &  \textbf{2131.00 ± 50.84} &           2.47 ± 0.04 &  \textbf{0.95 ± 0.01} &           0.46 ± 0.05 &           0.49 ± 0.05 \\
& Max-util                        &           1840.06 ± 30.92 &           2.56 ± 0.04 &           0.86 ± 0.00 &           0.24 ± 0.01 &           0.62 ± 0.01 \\
& EO                              &           1971.54 ± 67.78 &           2.24 ± 0.05 &           0.74 ± 0.03 &           0.65 ± 0.01 &           0.09 ± 0.03 \\
        \bottomrule
    \end{tabular}
\vspace{-12pt}
\end{table}

Generally, fairness interventions come at the expense of a decrease in the return, representing the bank's profit. Therefore, Bisimulator and fairness aware baselines expectedly achieve lower returns compared to the greedy ones. But interestingly, Bisimulator achieves similar or higher returns in the scenario with conscientiousness, showing its capability in handling more challenging scenarios. 

To further shed light on how Bisimulator changes the observation dynamics, \cref{fig:lending_observable_credit} shows the credit gap between the groups for two sets of credit scores: the actual credit scores that govern the MDP dynamics and applicant's probability of repayment, and the agent-modified credit scores that only affect the observation space. 
% As discussed earlier, these modifications can be regarded as overriding the credit score of an applicant for ensuring fair outcomes.
The credit gap in the latter is much smaller, indicating that Bisimulator has indeed optimized the observation dynamics to favor fair outcomes. Interestingly, examining the optimized parameters reveals that Bisimulator has learned to provide higher credit increase upon loan repayment to the disadvantaged group and penalize them less upon loan default.

\rebuttal{Finally, to demonstrate the scalability of our method to more complicated scenarios, \cref{fig:lending_res_multi_groups} and \cref{tab:lending_multi_group_res} in \cref{app:multigroups_res} present the results obtained for the lending scenario with $10$ groups. In such problems, \cref{eq:fair_bisim_loss} is evaluated and summed across all possible group pairs during a single update to optimize the reward and/or observation dynamics.}

\subsection{Case Study: College Admissions}
\label{sec:college_admission}
In this scenario, known as strategic classification \citep{hardt2016strategic}, an agent representing the college makes binary decisions regarding admissions. The challenge arises when applicants can incur costs to alter their observable features, such as test scores. This manipulation disproportionately burdens individuals from disadvantaged groups who lack the financial means to afford these costs. 

\textbf{Environment.} 
Each applicant has an observable group membership and a test score, along with an unobservable budget, both sampled from unequal group-specific distributions. At each time step, an applicant is sampled from the population and has a probability $\epsilon$ of being able to pay a cost to increase their score, provided their budget allows. The probability of success (e.g., the applicant eventually graduating) is a deterministic function of the true, unmodified score, and the agent's goal is to increase its accuracy in admitting applicants who will succeed. Importantly, since each applicant has a finite budget, over the episode horizon, the budget of the population decreases, hence making the problem sequential. Note that this environment is relatively different than that in \citep{d2020fairness} by having a more sequential nature due to its changing population. We study a scenario over 1,000 steps involving two groups, with group 2 facing a disadvantage. 

As an example of adjustable observation dynamics, described in \cref{sec:adjust_dyn}, we can consider group-specific costs for score modification. These adjustments can be seen as subsidized education for disadvantaged groups, a common practice. Additional details are in \cref{app:college_admission}. 

\textbf{Fairness Metrics.}
Following \citet{d2020fairness}, we use three metrics to assess fairness: \textbf{(a)} the \emph{social burden} \citep{milli2019social} that is the average cost individuals of each group have to pay to get admitted, \textbf{(b)} the cumulative number of admissions for each group, and \textbf{(c)} agent's aggregated \emph{recall}---$tp/(tp + fn)$---for admissions over the entire episode horizon, that is the ratio between the number of admitted successful applicants to the number of applicants who would have succeeded.

\begin{figure}[b!]
    \centering       
    \begin{subfigure}[b]{0.235\textwidth}
         \centering
         \includegraphics[width=\textwidth]{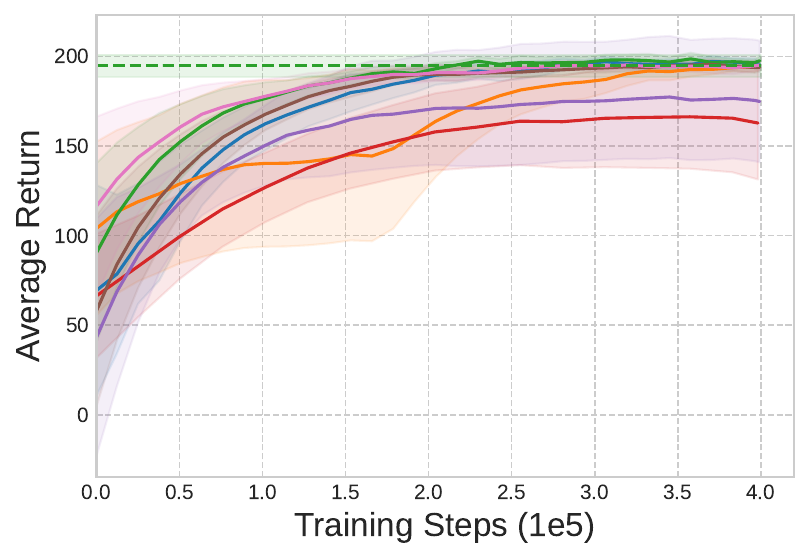}
         \caption{Average return}
         \label{fig:college_average_return}
    \end{subfigure}
    \hfill
    \begin{subfigure}[b]{0.235\textwidth}
         \centering
         \includegraphics[width=\textwidth]{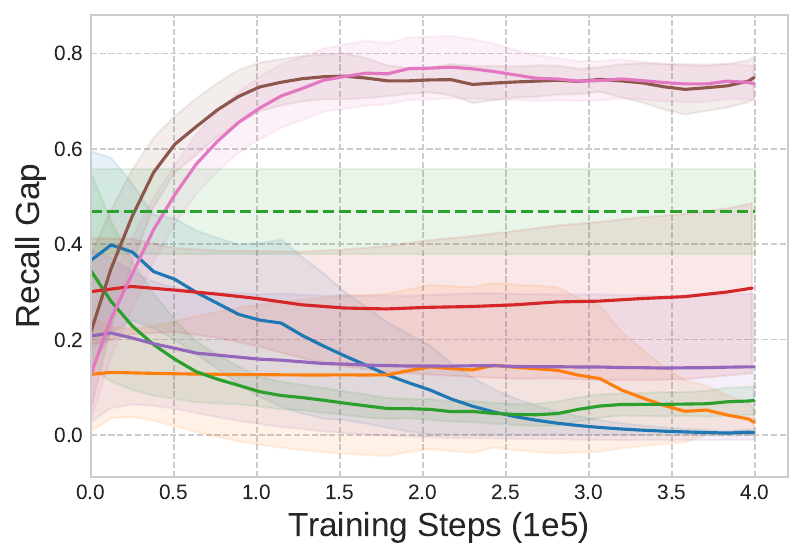}
         \caption{Recall gap}
         \label{fig:college_recall_gap}
    \end{subfigure}
    \hfill
    \begin{subfigure}[b]{0.235\textwidth}
         \centering
         \includegraphics[width=\textwidth]{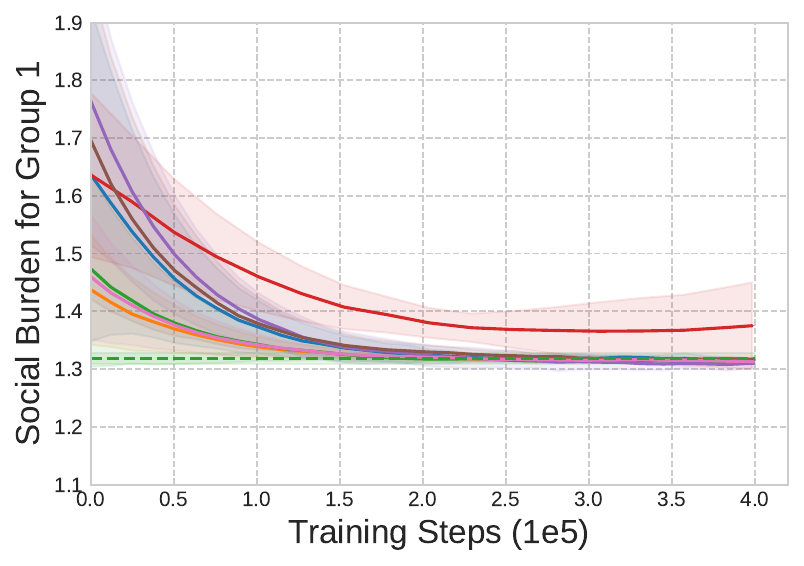}
         \caption{Social burden, group 1}
         \label{fig:college_social_burdern_g1}
    \end{subfigure}
    \hfill
    \begin{subfigure}[b]{0.235\textwidth}
         \centering
         \includegraphics[width=\textwidth]{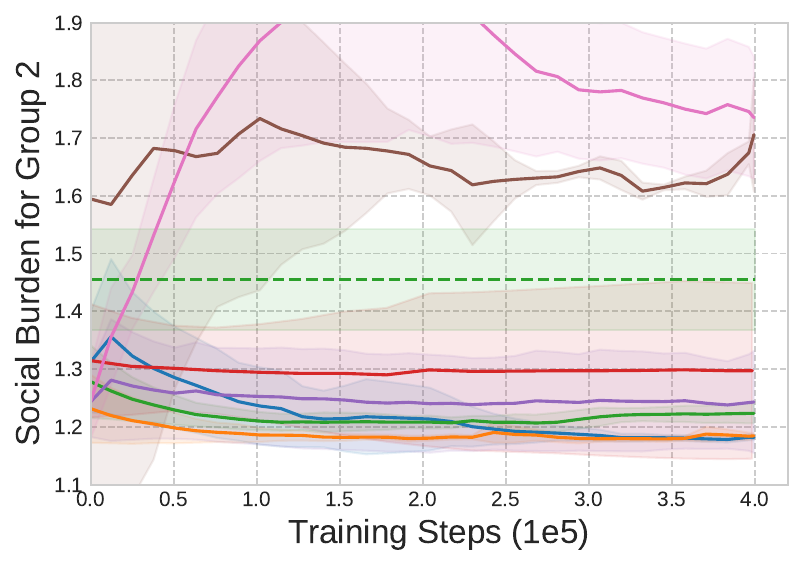}
         \caption{Social burden, group 2}
         \label{fig:college_social_burdern_g2}
    \end{subfigure}

    \vspace{4pt}
    \fcolorbox{gray}{white}{
\small
\begin{minipage}{0.9\textwidth}
\begin{tabular}{lllll}
    \cblock{sb_blue} PPO+Bisimulator & \cblock{sb_orange} DQN+Bisimulator & \cblock{sb_green} ELBERT-PO & \cblock{sb_red} Lag-PPO & \cblock{sb_purple} A-PPO \\
    \cblock{sb_brown} PPO            & \cblock{sb_pink} DQN               & \cdashed{sb_green} Classifier  &   &                         
\end{tabular}
\end{minipage}
}

    \vspace{4pt}
    \caption{College admission results. 
    %\textbf{(a)} Average return. \textbf{(b)} Recall gap. \textbf{(c)} Social burden for group~1. \textbf{(d)} Social burden for group 2. 
    The shaded regions show 95\% confidence intervals and plots are smoothed for visual clarity.}
    \label{fig:college_res}
\end{figure}

\begin{table}[t!]
    \centering
    \caption{College admission results. Reported values are the means and 95\% confidence intervals, evaluated at the end of the training. Highlighted entries indicate the best values and any other values within 5\% of the best value. Social burden is abbreviated as Soc. Bdn.}
    \label{tab:college_res}
    \scriptsize
    \setlength{\tabcolsep}{4.5pt}
    \begin{tabular}{lcccccc}
        \toprule
        {} & \textbf{Avg. Return} & \textbf{Soc. Bdn. (G1)} & \textbf{Soc. Bdn. (G2)} & \textbf{Recall (G1)} & \textbf{Recall (G2)} &  \textbf{Recall Gap}\\
        \midrule
PPO + Bisimulator               &   \textbf{192.42 ± 7.05} &      \textbf{1.32 ± 0.01} &      \textbf{1.18 ± 0.01} &  \textbf{1.00 ± 0.00} &  \textbf{1.00 ± 0.00} &  \textbf{0.00 ± 0.00} \\
PPO + Bisimulator (Rew. only) &            191.34 ± 6.65 &      \textbf{1.31 ± 0.01} &      \textbf{1.16 ± 0.02} &  \textbf{1.00 ± 0.00} &  \textbf{1.00 ± 0.00} &  \textbf{0.00 ± 0.00} \\
DQN + Bisimulator               &   \textbf{197.34 ± 6.93} &      \textbf{1.32 ± 0.00} &      \textbf{1.19 ± 0.01} &  \textbf{1.00 ± 0.00} &  \textbf{1.00 ± 0.00} &  \textbf{0.00 ± 0.00} \\
DQN + Bisimulator (Rew. only) &   \textbf{197.12 ± 9.91} &      \textbf{1.31 ± 0.01} &      \textbf{1.19 ± 0.01} &  \textbf{1.00 ± 0.00} &  \textbf{0.99 ± 0.02} &           0.01 ± 0.02 \\
ELBERT-PO                       &  \textbf{201.60 ± 10.91} &      \textbf{1.31 ± 0.00} &               1.22 ± 0.02 &  \textbf{1.00 ± 0.00} &           0.92 ± 0.04 &           0.08 ± 0.04 \\
Lag-PPO                         &           151.94 ± 40.48 &               1.39 ± 0.10 &               1.30 ± 0.15 &           0.85 ± 0.17 &           0.79 ± 0.16 &           0.34 ± 0.20 \\
A-PPO                           &           172.30 ± 35.49 &      \textbf{1.31 ± 0.01} &               1.25 ± 0.11 &           0.88 ± 0.17 &           0.74 ± 0.27 &           0.14 ± 0.15 \\
PPO                             &   \textbf{193.92 ± 6.55} &      \textbf{1.32 ± 0.01} &               1.83 ± 0.42 &  \textbf{1.00 ± 0.00} &           0.22 ± 0.06 &           0.78 ± 0.06 \\
DQN                             &   \textbf{197.04 ± 5.14} &      \textbf{1.31 ± 0.01} &               1.69 ± 0.09 &  \textbf{1.00 ± 0.00} &           0.28 ± 0.04 &           0.72 ± 0.04 \\
Classifier                      &   \textbf{194.78 ± 6.13} &      \textbf{1.32 ± 0.01} &               1.45 ± 0.09 &  \textbf{1.00 ± 0.00} &           0.53 ± 0.09 &           0.47 ± 0.09 \\
        \bottomrule
    \end{tabular}    
\end{table}

\textbf{Baselines.} We evaluate our method against the same RL baselines described in \cref{sec:college_admission}. As a non-RL baseline, we employ a classifier that maximizes its accuracy through supervised learning, based on \citep{d2020fairness}. Additional details are in \cref{app:implementation}.

\textbf{Results.}
\cref{fig:college_res} and \cref{tab:college_res} show the results of the college admission environment. Bisimulator achieves the lowest recall gap and social burden for the disadvantaged group (group 2) compared to other methods. Similarly to \cref{sec:college_admission}, Bisimulator achieves equal performance when paired with either DQN or PPO, demonstrating its applicability to various RL algorithms. See \cref{app:college_admission_results} for cumulative admissions, recall values, and the results for Bisimulator (Reward only). 

Analyzing the optimized parameters of the observation dynamics shows that Bisimulator has successfully learned to lower the cost of score modifications for the disadvantaged group. This aligns with the expected behavior, aiming to reduce the social burden on individuals of that group.

\section{Related Work}
\label{sec:related_work}
\paragraph{Fairness in Sequential Decision Making.}
In recent years, there has been a growing emphasis on the significance of dynamic analysis of fairness measures \citep{nashed2023fairness}. However, the exploration of these issues remains relatively restricted. 
The majority of existing studies focus on investigating fairness in multi-armed bandits \citep{liu2017calibrated, joseph2016fairness, do2022contextual, metevier2019offline, bistritz2020my, hossain2021fair}. While the simplicity of the bandit problem allows for easier theoretical analysis, its practical applications often extend no further than recommender systems, failing to fully encompass the broader spectrum of real-world applications.
In the context of RL, \citet{jabbari2017fairness} have proposed a fairness constraint suitable for the MDP setting, while providing a provably fair algorithm under an approximate notion of this constraint. Similarly, in the majority of the recently proposed approaches, fairness notions are adapted from the supervised learning setting and imposed as constraints during training of the optimal policy \linebreak \citep{wen2021algorithms, yu2022policy, satija2023group, yin2023long, hu2023striking, frauen2024fair}. The recently proposed method of \citet{xu2024adapting} has adapted the concept of benefit rates to the RL setting and has demonstrated state-of-the-art performance.  Another set of approaches use multi-objective MDPs \citep{siddique2020learning,blandin2024group}, causal inference \citep{nabi2019learning}, or the concept of welfare \citep{cousins2024welfare, yu2023fair}. Finally, fairness is particularly important in multi-agent MDPs to ensure an optimal agent does not hinder the performance of other agents \citep{zhang2014fairness, jiang2019learning, mandal2022socially, ju2023achieving}. 

\paragraph{Optimization of MDP Reward (Reward Shaping).} Reward shaping is a technique involving the optimization of the reward signal to encourage desirable behaviors and discourage undesirable ones, ultimately leading to more effective learning \citep{ng1999policy}. Common approaches include potential-based \citep{ng1999policy, devlin2012dynamic, gao2015potential}, heuristics-based \citep{cheng2021heuristic}, intrinsic motivation \citep{chentanez2004intrinsically, singh2010intrinsically}, bi-level optimization \citep{hu2020learning}, and gradient-based \citep{sorg2010reward, zheng2018learning} methods. Our proposed approach is closest to the bi-level optimization of \citet{hu2020learning}, however, the novelty of our approach is that the reward shaping procedure is guided by the bisimulation metric.

\paragraph{Optimization of MDP Dynamics.} 
In contrast to the extensively explored concept of reward shaping, the optimization of MDP dynamics remains relatively less investigated. This disparity could be due to its stricter prerequisites, necessitating access to certain parameters within the dynamics model. The predominant focus in this domain revolves around the control and co-optimization of robots  \citep{bacher2021design, spielberg2019learning, spielberg2021co, ma2021diffaqua, wang2022curriculum, wang2023softzoo, evans2022stochastic}. These works primarily aim to achieve an enhanced performance by concurrently learning to control a robot and optimizing its design and dynamical properties. Given the intertwined nature of learning and optimization, the problem poses significant challenges, leading to the proposition of both gradient-based \citep{spielberg2019learning, hu2019chainqueen} and gradient-free \citep{cheney2018scalable} optimization techniques. Notably, our method only optimizes the observation dynamics and leaves the underlying transitions, that affect the inherent behavior of the system, unchanged. 

\section{Broader Impact and Limitations}
\label{sec:impact_limitation}
Addressing fairness in machine learning algorithms holds significant promise for promoting social justice and equity in various domains. By mitigating disparities, our proposed algorithm improves fairness in sequential decision making processes. However, it is important to acknowledge the limitations of our simulated experiments, which are based on simplified problems that may not fully capture real-world complexities.  While we recognize the need for more sophisticated benchmarks, developing them is beyond the scope of this paper. Instead, we have utilized and extended the only well-established benchmark in this area \citep{d2020fairness}, which has been widely used in recent studies \citep{xu2024adapting, deng2024what, hu2023striking, yu2022policy}.

Additionally, in this work, our focus is on group fairness, particularly the notion of demographic parity \citep{dwork2012fairness} and its adaptation to RL \citep{satija2023group}. 
Our method's consistent success across various scenarios and metrics confirms that the demographic parity definition has broad applicability and effectiveness, laying a solid foundation for future research into other fairness notions. \rebuttal{Finally, convergence proofs for RL methods based on $\pi$-bisimulation metrics are an open topic of research \citep{kemertas2021towards}. It requires an intricate analysis on how the fixed-point properties of $\pi$-bisimulation interact with the convergence properties of a bisimulation-dependent policy, as they both rely on one another. This is an interesting research avenue on its own, beyond the primary focus of our paper, which is the application of bisimulation metrics for group fairness. Nonetheless, our approach and other methods \citep{zhang2020learning} have demonstrated strong and consistent empirical performance.}

\section{Conclusion}
\label{sec:conclusion}
In this paper, we established the connection between bisimulation metrics and group fairness in reinforcement learning. Based on this insight, we proposed a novel approach that \rebuttal{optimizes} the reward function and observation dynamics of an MDP such that unconstrained optimization of the policy inherently results in the satisfaction of the fairness constraint. Crucially, these adjustments are carried out by the agent or a third-party regulator, without modifying the original MDP or its dynamics. A significant advantage of our approach is that it does not require modifying the underlying reinforcement learning algorithms, hence preserving the integrity of current decision making algorithms. Our method outperforms strong baselines on a standard fairness benchmark, highlighting its effectiveness.

\section*{Acknowledgments}
SRS is supported by an NSERC CGS-D scholarship. We would like to thank Maryam Molamohammadi, Golnoosh Farnadi, Marc G. Bellemare, and Pablo Samuel Castro for their insightful discussions and valuable feedback on this project. The computing resources for this research were provided by Calcul Quebec and the Digital Research Alliance of Canada.

\bibliography{refs}
\bibliographystyle{iclr2025_conference}

\clearpage
\appendix
\clearpage
\section{Proofs}
\label{app:proofs}
% Proofs

\subsection{Bisimulation}
\label{app:bisim}
\paragraph{Bisimulation}

Bisimulation is a fundamental concept in concurrency theory \citep{LARSEN19911}. It defines an equivalence relation between state-transition systems, ensuring that two systems can simulate each other's long-term behavior and remain indistinguishable to an external observer. Our work builds on the established theory of bisimulation developed by \citet{LARSEN19911, Desharnais02, ferns2004metrics, ferns2011bisimulation, castro2020scalable}, among others. Notably, we do not fully explore the potential of the conditional form of bisimulation metrics in this work. Our definitions, similarly to \citet{hansen2022bisimulation}, possess specific properties that allow us to reduce them to existing definitions. A comprehensive examination of the conditional form of bisimulation should be addressed as a standalone topic, as it lies beyond the scope of this work.

Given that our work extensively relies on the concept of metric spaces, we provide a summary of their definition below for completeness. For a more detailed introduction, we refer the reader to the existing literature and the work of \citet{doi:10.1142/p595}.

A metric space is a pair $(X, d)$, where $X$ is a set and $d: X \times X \rightarrow \mathbb{R}_{\geq 0}$ is a function satisfying the following properties: (i) $\forall x, y \in X$, $d(x, y) = 0$ if and only if $x = y$ (identity), (ii) $\forall x, y \in X$, $d(x, y) = d(y, x)$ (symmetry), and (iii) $\forall x, y, z \in X$, $d(x, z) \leq d(x, y) + d(y, z)$ (triangle inequality). If $d$ satisfies these properties, it is called a \textit{metric}; if the identity property is relaxed, it is called a \textit{pseudometric}. The bisimulation metrics defined in this work are pseudometrics, as they relax the identity property—specifically, $d(s_i, s_j) = 0$ when $s_i$ and $s_j$ are behaviorally indistinguishable, but not necessarily when $s_i = s_j$. With this foundation, we can now proceed with the proofs of the definitions.

For convenience, we restate Theorem 2 of \citet{castro2020scalable} using our notation.

Define $\mathcal{F}^\pi : \mathbb{M} \to \mathbb{M}$ by $\mathcal{F}^\pi(d)(s, t) = |R^\pi(s) - R^\pi(t)| + \gamma W_1(d)(\tau^\pi(s), \tau^\pi(t))$. Then, $\mathcal{F}^\pi$ has a least fixed point $d_\sim^\pi$, and $d_\sim^\pi$ is a $\pi$-bisimulation metric.

\bisimmetric*

\begin{proof}
Consider the MDP $\mathcal{M}_\mathcal{G} = (\mathcal{S}, \mathcal{A}, \mathcal{G}, \tau_a, R, \gamma)$. Define a new MDP $\overline{\mathcal{M}}_\mathcal{G} = (\overline{\mathcal{S}}, \mathcal{A}, \overline{\tau}_a, \overline{R}, \gamma)$, where $\overline{\mathcal{S}} = \mathcal{S} \times \mathcal{G}$, $\overline{\tau}_a : \overline{\mathcal{S}} \times \mathcal{A} \to \Dist(\overline{\mathcal{S}})$, and $\overline{R}: \overline{\mathcal{S}} \times \mathcal{A} \to \mathbb{R}$. We can rewrite $\mathcal{F}^\pi_{\text{group}}$ from \cref{eq:group_bisim} as follows:

$$
 \mathcal{F}^\pi_{\text{group}}(d)(\overline{s}_i, \overline{s}_j) = \left| \overline{R}^\pi(\overline{s}_i) - \overline{R}^\pi(\overline{s}_j) \right| + \gamma W_1(d)(\overline{\tau}^\pi (\overline{s}'_i \mid \overline{s}_i), \overline{\tau}^\pi (\overline{s}'_j \mid \overline{s}_j))
$$

The state transition function $\overline{\tau}_a$ now outputs the group membership for the next state, which remains constant by assumption in \cref{def:group}. Thus, the transition probability for this variable is deterministic, allowing us to concatenate $\mathcal{S}$ and $\mathcal{G}$ without altering the original definitions.

This formulation of $\mathcal{F}^\pi_{\text{group}}$ matches Castro's definition of $\mathcal{F}^\pi$, and the remainder of the proof follows the same steps as in Theorem 2 of \cite{castro2020scalable}. In summary, this proof mimics the argument of \cite{ferns2011bisimulation}, with the added demonstration that $\mathcal{F}^\pi$ is continuous.
\end{proof}

Similarly, we restate Theorem 3 of \citet{castro2020scalable}.

Given any two states $s, t \in S$ in an MDP $\mathcal{M}$, $\left| V^\pi(s) - V^\pi(t) \right| \leq d_\sim^\pi(s, t)$.

\bisimbound*

\begin{proof}
    Consider the MDP $\mathcal{M}_\mathcal{G} = (\mathcal{S}, \mathcal{A}, \mathcal{G}, \tau_a, R, \gamma)$ and define the new MDP
    
    $\overline{\mathcal{M}}_\mathcal{G} = (\overline{\mathcal{S}}, \mathcal{A}, \overline{\tau}_a, \overline{R}, \gamma)$, where $\overline{\mathcal{S}} = \mathcal{S} \times \mathcal{G}$, $\overline{\tau}_a : \overline{\mathcal{S}} \times \mathcal{A} \to \Dist(\overline{\mathcal{S}})$, and $\overline{R}: \overline{\mathcal{S}} \times \mathcal{A} \to \mathbb{R}$. We can rewrite \cref{eq:group_bisim_bound} as:
    $$
    \left| V^\pi(\overline{s}_i) - V^\pi(\overline{s}_j) \right| \leq d^\pi_{\text{group} \sim}(\overline{s}_i, \overline{s}_j)
    $$
    This bound on the value function difference matches Castro's definition, and the remainder of the proof follows Theorem 3 in \cite{castro2020scalable}, using induction on the standard value update.
\end{proof}

\subsection{Demographic Fairness with Bisimulation}
\label{app:fair_bisim}

\paragraph{Extending Demographic Fairness to Infinite Horizon.}
\citet{satija2023group} defines the notion of demographic fairness using the expected cumulative reward in a finite-horizon setting on finite state and action spaces. Similarly to the case studies presented in our work, one can easily imagine the number of applicable scenarios where such assumptions hold true. An advantage of using bisimulation metrics in this setting is that they are defined for infinite horizon. As such, we must extend the definition of \citet{satija2023group} to an infinite horizon case. To do so, we simply use the discounted expected cumulative return instead. More precisely, we use the definition of $J^{\pi}$ that includes a discount factor $\gamma \in (0,1]$.

Given an MDP $\mathcal{M}_{group}$ as introduced in \cref{def:group_mdp}, at a specific time step $t$, the return of the policy $J^{\pi}$ is as follows:
\begin{equation*}
    J^\pi = \sum_{s, g}\rho(s_t, g_t) \mathbb{E}_{\pi} \left[\sum_{k=0}^\infty \gamma^k R(S_{t+k}, A_{t+k}, g) \mid S_t=s, G = g \right]
\end{equation*}
As opposed to \citet{satija2023group}, who defines it for a horizon $H$ as:
\begin{equation*}
    J^\pi = \sum_{s, g}\rho(s_t, g_t) \mathbb{E}_{\pi} \left[\sum_{k=0}^H R(S_{t+k}, A_{t+k}, g) \mid S_t=s, G = g \right]
\end{equation*}

\paragraph{Bounding group-conditioned $\pi$-bisimulation metric.} An important result that \citet{castro2020scalable} shows in his work is the convergence of the $\pi$-bisimulation metric. Specifically, by assuming that we can sample transitions infinitely often, for a time step $t$, updating $\lim_{t \to \infty} d^{\pi}_t = d^{\pi}_\sim$ almost certainly. We use this result to bound $d^{\pi}_{\sim}$ by an arbitrary $\epsilon \in \R$.

\paragraph{Achieving Demographic Parity Fairness.} Given the previous statements, we can now derive the proof for \cref{prop:group_bisim_dp}. 

\bisimfair*

\begin{proof}
    We begin from the definition of demographic fairness as in \cref{def:dp}:
    \begin{align*}
        |J^\pi(s_i, g_i) - J^\pi(s_j, g_j)| &= | \mathbb{E}_{\rho(s,g)}[V^\pi(s_i,g_i)] - \mathbb{E}_{\rho(s,g)}[V^\pi(s_j,g_j)]| \\
        &\leq \mathbb{E}_{\rho(s,g)}\big[|V^\pi(s_i,g_i) - V^\pi(s_j,g_j)|\big] \\
        &\leq \mathbb{E}_{\rho(s,g)}\big[d^{\pi}_{\text{group}\sim}\big((s_i,g_i), (s_j,g_j)\big)\big] \\
        &\leq \epsilon
    \end{align*}
    Where the second line follows from the triangle inequality. We can see that the third line follows from \cref{thm:group_bisim_bound} and is exactly equal to our definition of $J$ in \cref{eq:fair_bisim_loss}. Then, since we can bind the group-conditioned $\pi$-bisimulation metric by an epsilon, it follows that minimizing the metric in expectation leads to minimizing the fairness bound. Thus, we can achieve fairness up to an acceptable margin of error $\epsilon$ using bisimulation metrics.
\end{proof}

% [3. show that we acheive demographic parity fairness by minimizing our $J_{\text{group}}$] 
% \begin{itemize}
%         \item $|J^\pi(s, g) - J^\pi(t, p)| = | \mathbb{E}_{\rho}[V^\pi(s,g)] - \mathbb{E}[V^\pi(t,p)]|$
%     \item $= | \mathbb{E} [ V^\pi(s,g) - V^\pi(t,p)] |$
%     \item $= \mathbb{E} | [ V^\pi(s,g) - V^\pi(t,p)]  | $
%     \item $\leq \mathbb{E} [d((s,g), (t,p))] $ group-conditioned pi-bisim metric
%     \item $= J_{\text{group}}$ potentially could write out the expanded version 
%     \item $\leq \epsilon$ as we showed in 2.
% \end{itemize}

% We showed here that the convergent and monotonically decreasing properties of the goal-conditioned metric $d_{\sim}$ allow us to achieve the demographic fairness criterion.

\clearpage
\section{Environment Details}
\label{app:env_details}
The code for the environments is included in the supplemental material, and will be made publicly available. These environments are accurate re-implementations of \href{https://github.com/google/ml-fairness-gym}{ml-fairness-gym} \citep{d2020fairness}. In comparison, our environments have additional features and more user-friendly implementations, and follow the updated \href{https://github.com/Farama-Foundation/Gymnasium}{Gymnasium} \citep{towers_gymnasium_2023} API rather than the deprecated \href{https://github.com/openai/gym}{OpenAI Gym} \citep{brockman2016openai} interface.

\subsection{Lending Environment}
\label{app:lending_env}
\paragraph{Environment.}
In the lending scenario, an agent representing the bank makes binary decisions loan applications with the goal of increasing its profit. Each applicant has an observable group membership $g \in \mathcal{G}$ and a discrete credit score $1 \leq c \leq C_{max}$ sampled from group-specific and unequal initial distributions $p_0(c | g)$. At each time step $t$, applicants are sampled uniformly with replacement from the population, and the agent decides to accept or reject the loan application. Successful repayment raises the applicant's credit score by $c_+$, benefiting the agent financially with $r_+$. Defaulting, however, reduces the credit score by $c_-$ and the agent's profit by $r_-$. If the agent rejects the loan, it receives no reward. As discussed in Section \cref{sec:lending}, we examine two variants of the lending scenario:
\begin{enumerate}[noitemsep]
    \item \textbf{Credit only:} The probability of repayment is a deterministic function of the applicant's credit score, similarly to \citet{d2020fairness}. However, this model oversimplifies certain dynamics, as the probability of repayment in reality can be a function of many factors beyond the credit score. Additionally, this model fails to capture the case where an individual is assigned a low credit score due to their limited credit history, rather than their true likelihood of loan repayment.
    \item \textbf{Credit + Conscientiousness:} The probability of repayment is a function of the applicants credit score and an unobservable latent variable representing the applicants conscientiousness. The conscientiousness for each individual is sampled from a Normal distribution and is independent from their group membership. 
\end{enumerate}
The observation space in both variants include the applicant's credit score, group membership, the ratio of the past loan repayments, and the ratio of the past loan defaults. As discussed \cref{sec:adjust_dyn}, the Bisimulator algorithm, is allowed to change the observation dynamics. In this scenario, Bisimulator changes the group-specific values for $c_+$ and $c_-$. For instance, applicants from the disadvantaged group may receive a higher credit increase upon loan repayment, compared to those who belong to the advantaged group. This is a realistic assumption since in practice, banks or other regulators are allowed to override credit scores during their decision making process \citep{fdicFair}. Importantly, these changes are carried out by the agent and only affect the observation space, leaving the underlying dynamics and the probability of repayment unchanged. In other words, the changes are on the agent side and affect how it ``sees'' the observations and they do not impact the actual dynamics.

\cref{fig:lending_init_credit} shows the initial credit score distribution for each group, and \cref{tab:lending_env} presents additional details of this environment.

\begin{figure}[b!]
    \centering
    \includegraphics[width=0.5\textwidth]{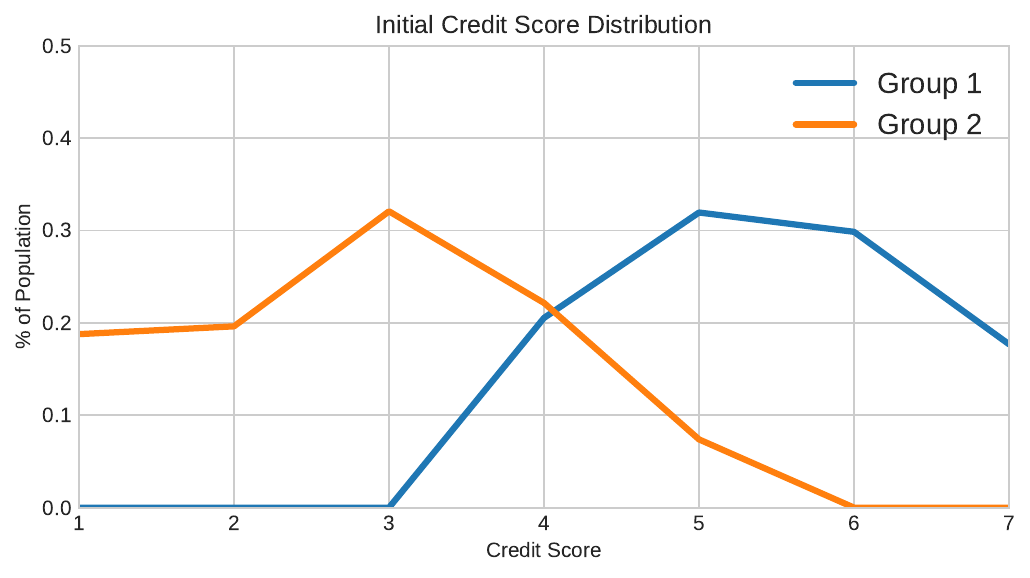}
    \caption{Initial credit score distribution for each group.}
    \label{fig:lending_init_credit}
\end{figure}

\paragraph{Fairness Metrics}
Following \citet{d2020fairness}, we use three metrics to assess fairness: \textbf{(a)} the \emph{social burden} \citep{milli2019social} that is the average cost individuals of each group have to pay to get admitted, \textbf{(b)} the cumulative number of admissions for each group, and \textbf{(c)} agent's aggregated \emph{recall}---$tp/(tp + fn)$---for admissions over the entire episode horizon, that is the ratio between the number of admitted successful applicants to the number of applicants who would have succeeded.

\begin{table}[t!]
\centering
\caption{Details of the lending environment.}
\label{tab:lending_env}    
\begin{tabular}{lc}
    \toprule
    \textbf{Parameter}                             & \textbf{Value} \\ 
    \midrule
    Number of groups                               & $2$ \\
    Group distributions                            & $(0.5, 0.5)$ \\
    $C_{max}$                                        & $7$ \\
    $c_+$ and $c_-$                                & $+1$ and $-1$ \\
    $r_+$ and $r_-$                                & $+1$ and $-1$ \\
    Probability of repayment for each credit score & $(0.3, 0.4, 0.5, 0.6, 0.7, 0.8, 0.9)$ \\
    Conscientiousness distribution                 & $\mathcal{N}(0.55, 0.1)$ \\
    Population size                                & 1000  \\
    Episode horizon (steps)                        & 10000 \\
    \bottomrule
\end{tabular}
\end{table}

\subsection{College Admissions Environment}
\label{app:college_admission}
\paragraph{Environment}
In the college admissions scenario, an agent representing the college makes binary decisions regarding admissions. Each applicant has an observable group membership $g \in \mathcal{G}$ and a discrete test score $1 \leq c \leq C_{max}$, along with an unobservable budget $0 \leq b \leq B_{max}$, both sampled from unequal group-specific distributions $p_0(c|g)$ and $p_0(b|g)$. At each time step $t$, an applicant is sampled from the population and has a probability $\epsilon$ of being able to pay a cost to increase their score, provided their budget allows. The probability of success (e.g., the applicant eventually graduating) is a deterministic function of the true, unmodified score, and the agent's goal is to increase its accuracy in admitting applicants who will succeed. If the agent correctly admits an applicant, it receives a reward $r_+$ and if it rejects an applicant who would have been successful, it receives a reward of $r_-$, otherwise its reward is zero. If an applicant is admitted during an episode, it is no longer sampled. Importantly, since each applicant has a finite budget, over the episode horizon, the budget of the population decreases, hence making the problem sequential. Note that this environment is substantially different than that in \citep{d2020fairness} by having a more sequential nature due to its changing population. 

\begin{table}[b!]
\centering
\caption{Details of the college admissions environment.}
\label{tab:college_env}    
\begin{tabular}{lc}
    \toprule
    \textbf{Parameter}                             & \textbf{Value} \\ 
    \midrule
    Number of groups                               & $2$ \\
    Group distributions                            & $(0.5, 0.5)$ \\
    $C_{max}$                                        & $10$ \\
    $B_{max}$                                        & $5$ \\    
    $r_+$ and $r_-$                                & $+1$ and $-1$ \\
    Probability of success for each score & $(0.0, 0.1, 0.2, 0.3, 0.4, 0.5, 0.6, 0.7, 0.8, 0.9)$ \\
    Probability of score modification ($\epsilon$) & 0.5 \\
    Score distributions                  & Group 1: $\mathcal{N}(8, 1)$, Group 2: $\mathcal{N}(5, 1)$\\
    Budget distributions                 & Group 1: $\mathcal{N}(4, 1)$, Group 2: $\mathcal{N}(2, 1)$ \\    
    Population size                                & 1000  \\
    Episode horizon (steps)                        & 1000 \\
    \bottomrule
\end{tabular}
\end{table}

As discussed \cref{sec:adjust_dyn}, the Bisimulator algorithm, is allowed to change the observation dynamics. In this scenario, Bisimulator changes the group-specific costs for score modification. These adjustments can be seen as subsidized education for disadvantaged groups, a common practice. Importantly, these changes are carried out by the agent and only affect the observation space, leaving the underlying dynamics and the probability of success unchanged, since the probability of success is a function of the true, unchanged score. \cref{tab:college_env} presents additional details of this environment.

\paragraph{Fairness Metrics}
Following \citet{d2020fairness}, we use three metrics to assess fairness: \textbf{(a)} the \emph{social burden} \citep{milli2019social} that is the average cost individuals of each group have to pay to get admitted, \textbf{(b)} the cumulative number of admissions for each group, and \textbf{(c)} agent's aggregated \emph{recall}---$tp/(tp + fn)$---for admissions over the entire episode horizon, that is the ratio between the number of admitted successful applicants to the number of applicants who would have succeeded.

\clearpage
\section{Additional Experimental Results}
\label{app:results}
This section includes additional experimental results to complement that of \cref{sec:experiments}.

\subsection{Case Study: Lending}
\label{app:lending_results}
\cref{fig:lending_loans} shows the cumulative loans given to each group over the course of evaluation episodes. While all methods regularly approve loans of the first group, Bisimulator and ELBERT-PO are giving an equal amount of loans to the second group while keeping high recall values (refer to \cref{fig:lending_res} and \cref{tab:lending_res}).
\begin{figure}[ht!]
    \centering
    \begin{minipage}[t]{0.025\textwidth}
        \centering
        \begin{turn}{90}
            \textbf{\qquad\qquad\qquad Credit Only}
        \end{turn}
    \end{minipage}
    \begin{subfigure}[b]{0.48\textwidth}
         \centering
         \includegraphics[width=\textwidth]{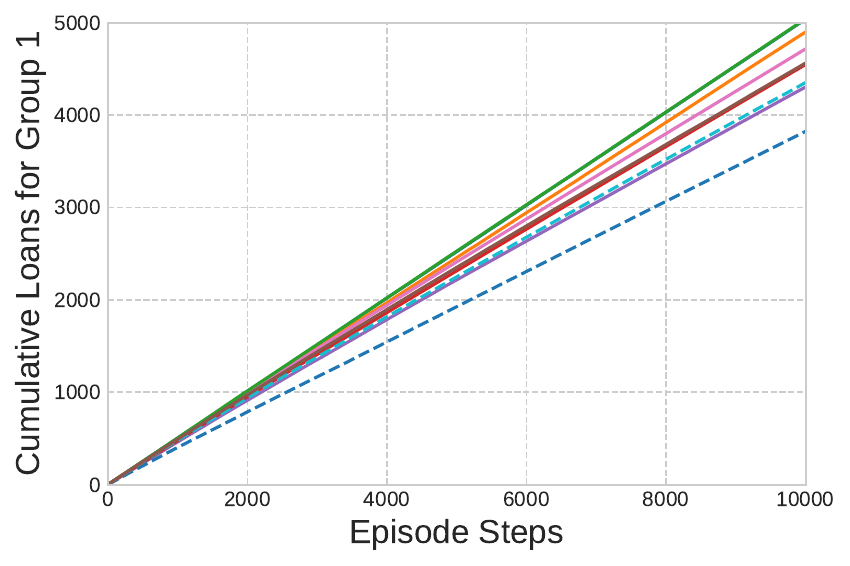}
         \caption{Cumulative loans for group 1.}
         \label{fig:lending_loans_group_1}
    \end{subfigure}
    \hfill        
    \begin{subfigure}[b]{0.48\textwidth}
         \centering
         \includegraphics[width=\textwidth]{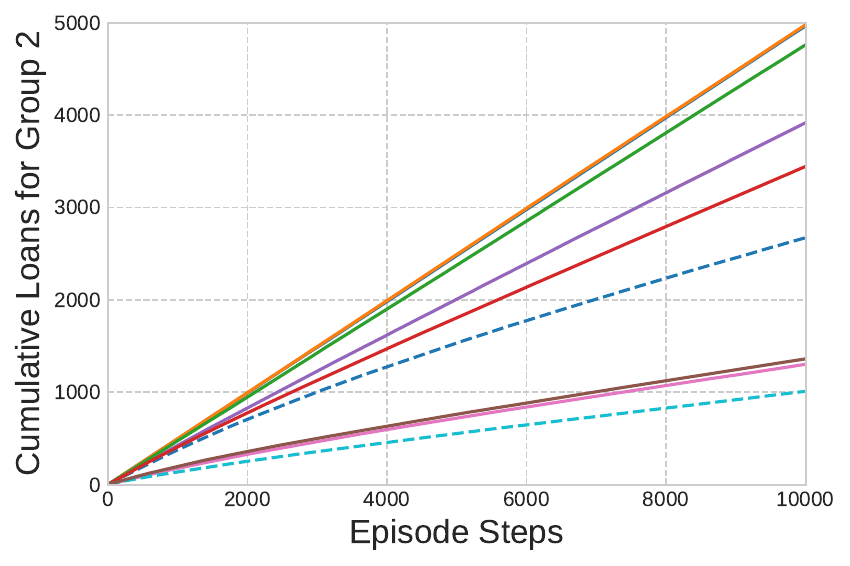}
         \caption{Cumulative loans for group 2.}
         \label{fig:lending_loans_group_2}
    \end{subfigure}

    \vspace{0.2em}
    \begin{minipage}[t]{0.025\textwidth}
        \centering
        \begin{turn}{90}
            \textbf{\qquad\qquad\quad Credit + Cons.}
        \end{turn}
    \end{minipage}
    \begin{subfigure}[b]{0.48\textwidth}
         \centering
         \includegraphics[width=\textwidth]{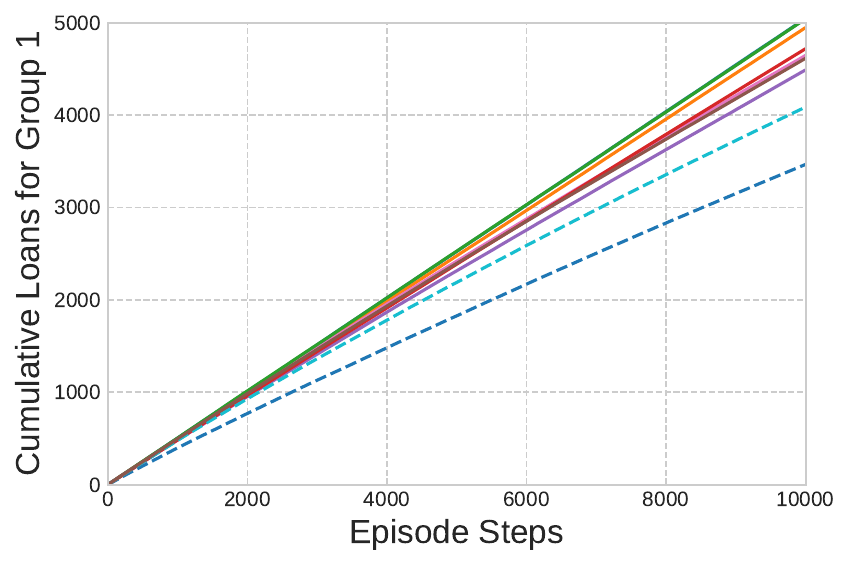}
         \caption{Cumulative loans for group 1.}
         \label{fig:cons_lending_loans_group_1}
    \end{subfigure}
    \hfill        
    \begin{subfigure}[b]{0.48\textwidth}
         \centering
         \includegraphics[width=\textwidth]{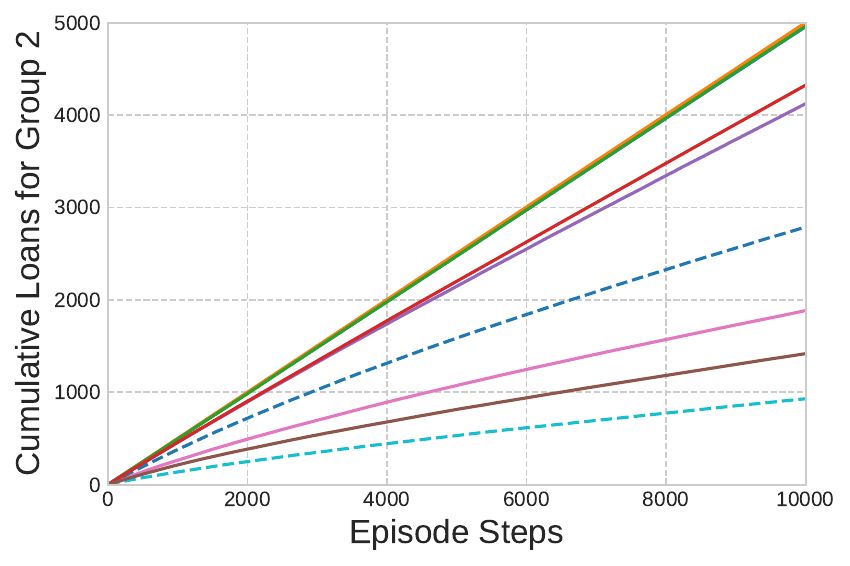}
         \caption{Cumulative loans for group 2.}
         \label{fig:cons_lending_loans_group_2}
    \end{subfigure}

    \vspace{4pt}
      
    \caption{Lending results. Cumulative loans given to each group over the course  of evaluation episodes. The first row \textbf{(a, b)} shows the lending scenario where the repayment probability is only a function of the credit score, while the second row \textbf{(c, d)} presents the case where the repayment probability is a function of the credit score and a latent conscientiousness parameter. Results are obtained on 10 seeds and 5 evaluations episodes per seed. Confidence intervals are not shown for visual clarity.} 
    \label{fig:lending_loans}
\end{figure}

\cref{fig:lending_recall_gap} shows the recall gap between the two groups over the training steps. Since A-PPO and EO are explicitly constrained to minimize the recall gap, they achieve low recall gaps, similarly to Bisimulator. However, the recall values for each group are considerably lower than those of Bisimulator (refer to \cref{fig:lending_res} and \cref{tab:lending_res}).
\begin{figure}[t!]
    \centering
    \begin{subfigure}[b]{0.45\textwidth}
        \centering
        \includegraphics[width=\textwidth]{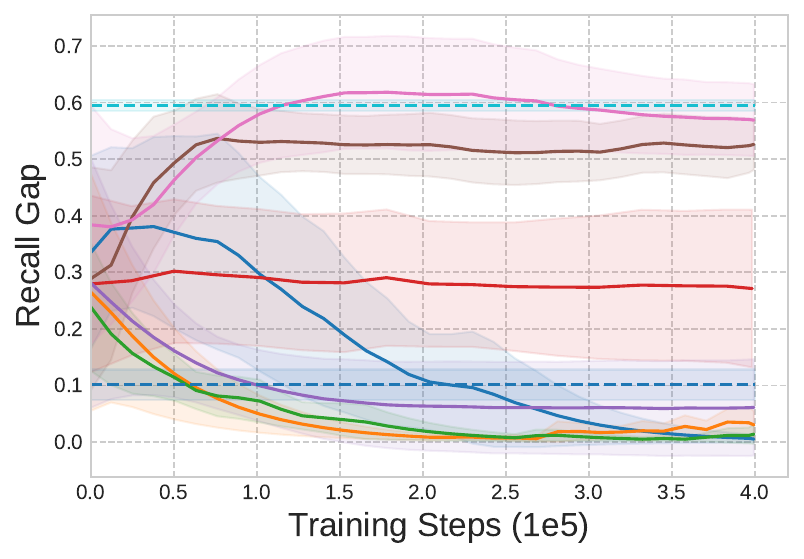}
        \caption{Recall gap for the credit only scenario.}
        \label{fig:lending_gap}
    \end{subfigure}
    \hfill
    \begin{subfigure}[b]{0.45\textwidth}
        \centering
        \includegraphics[width=\textwidth]{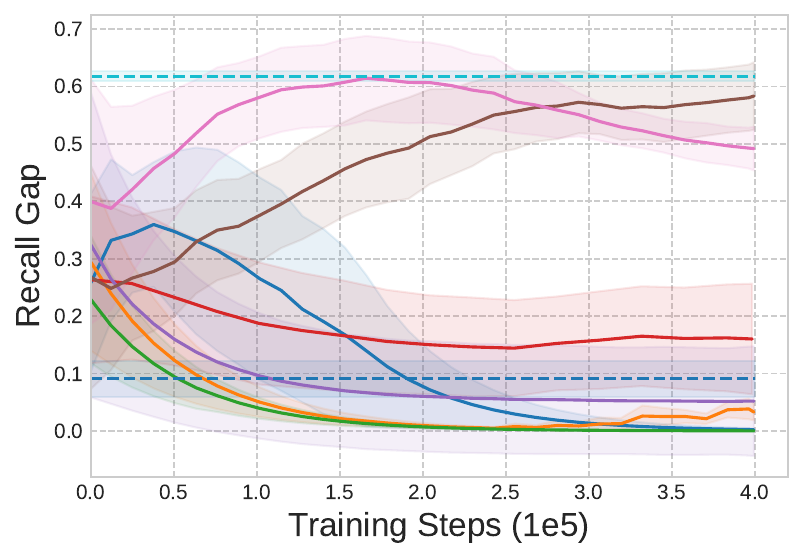}
        \caption{Recall gap for the credit + cons. scenario.}
        \label{fig:cons_lending_gap}
    \end{subfigure}

    \vspace{4pt}
    
    \caption{Lending results. Recall gaps between the two groups over the training steps. \textbf{(a)} shows the lending scenario where the repayment probability is only a function of the credit score, while the second row \textbf{(b)} presents the case where the repayment probability is a function of the credit score and a latent conscientiousness parameter. Results are obtained on 10 seeds and 5 evaluations episodes per seed. The shaded regions show 95\% confidence intervals and plots are smoothed for visual clarity.}
    \label{fig:lending_recall_gap}
\end{figure}

\cref{fig:lending_ablation_res} presents a comparison between Bisimulator and Bisimulator (Reward Only), complementing the results in \cref{tab:lending_res}. Although \rebuttal{optimizing} both dynamics and rewards improves the overall performance, the variant focusing solely on reward optimization remains competitive.

\begin{figure}[ht!]
    \centering   
    \begin{minipage}[t]{0.025\textwidth}
        \centering
        \begin{turn}{90}
            \textbf{\qquad Credit Only}
        \end{turn}
    \end{minipage}
    \begin{subfigure}[b]{0.235\textwidth}
         \centering
         \includegraphics[width=\textwidth]{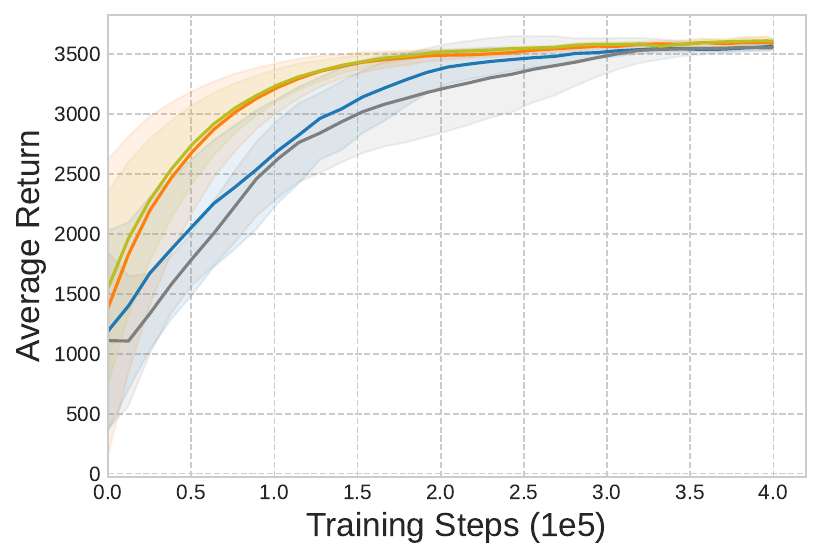}
         \caption{Average return.}
         \label{fig:lending_ablation_credit_avg_return}
    \end{subfigure}
    \hfill
    \begin{subfigure}[b]{0.235\textwidth}
         \centering
         \includegraphics[width=\textwidth]{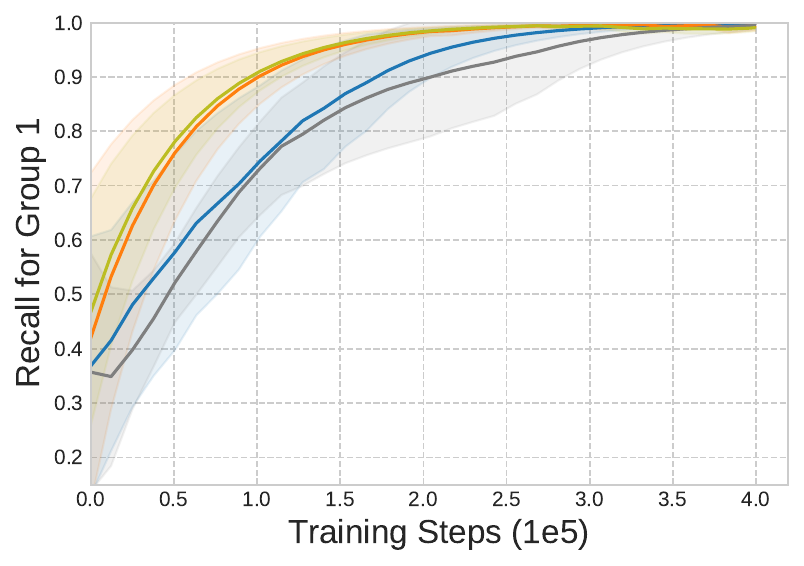}
         \caption{Recall, group 1.}
         \label{fig:lending_ablation_credit_recall_g1}
    \end{subfigure}
    \hfill
    \begin{subfigure}[b]{0.235\textwidth}
         \centering
         \includegraphics[width=\textwidth]{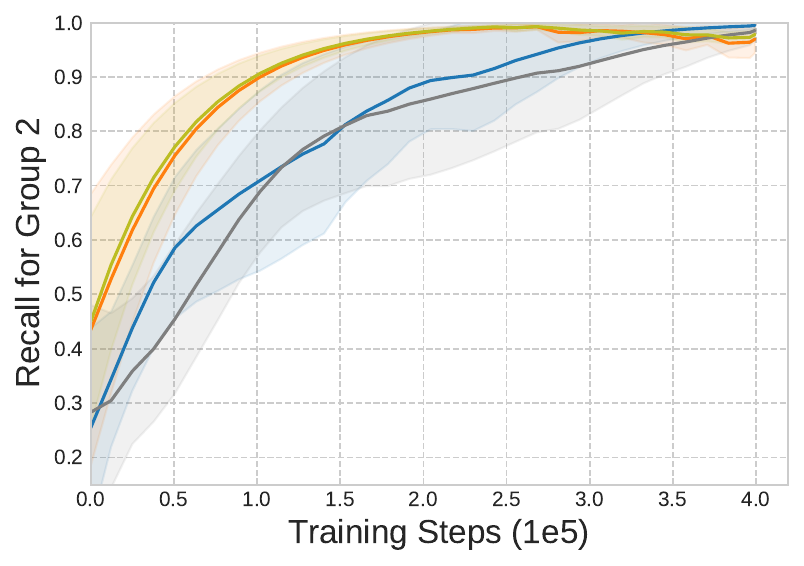}
         \caption{Recall, group 2.}
         \label{fig:lending_ablation_credit_recall_g2}
    \end{subfigure}
    \hfill
    \begin{subfigure}[b]{0.235\textwidth}
         \centering
         \includegraphics[width=\textwidth]{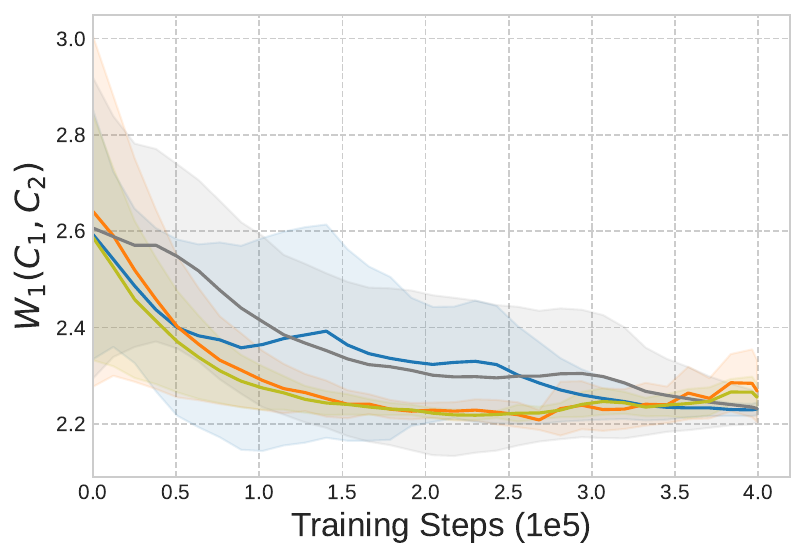}
         \caption{Credit gap.}
         \label{fig:lending_ablation_credit_credit_gap}
    \end{subfigure}    

    \vspace{0.2em}
    \begin{minipage}[t]{0.025\textwidth}
        \centering
        \begin{turn}{90}
            \textbf{\qquad Credit + Cons.}
        \end{turn}
    \end{minipage}
    \begin{subfigure}[b]{0.235\textwidth}
         \centering
         \includegraphics[width=\textwidth]{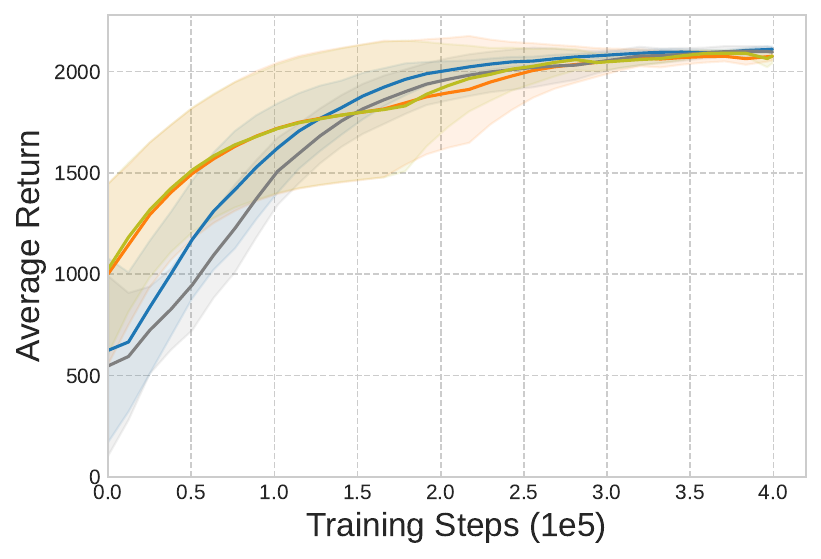}
         \caption{Average return.}
         \label{fig:lending_ablation_credit_cons_avg_return}
    \end{subfigure}
    \hfill
    \begin{subfigure}[b]{0.235\textwidth}
         \centering
         \includegraphics[width=\textwidth]{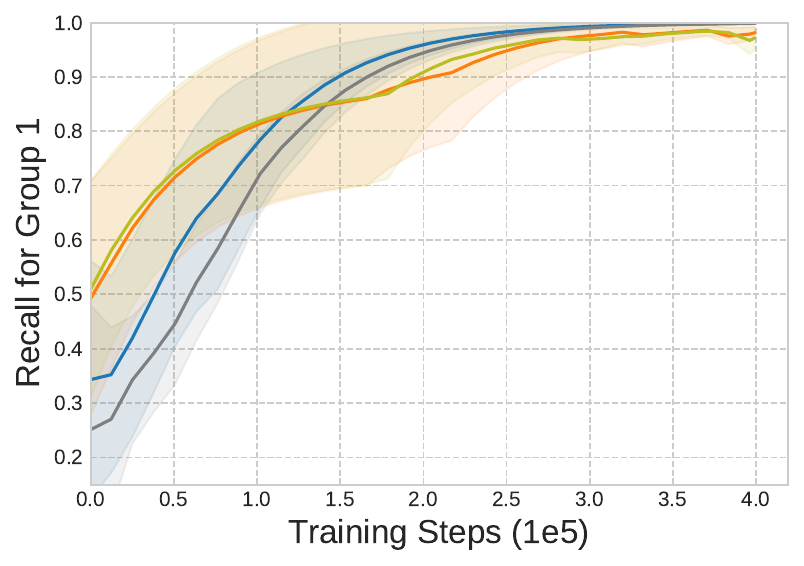}
         \caption{Recall, group 1.}
         \label{fig:lending_ablation_credit_cons_recall_g1}
    \end{subfigure}
    \hfill
    \begin{subfigure}[b]{0.235\textwidth}
         \centering
         \includegraphics[width=\textwidth]{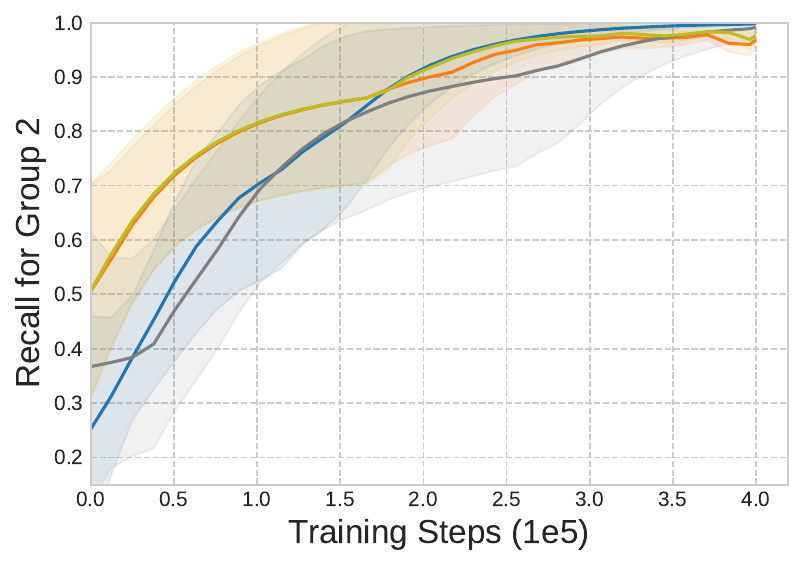}
         \caption{Recall, group 2.}
         \label{fig:lending_ablation_credit_cons_recall_g2}
    \end{subfigure}
    \hfill
    \begin{subfigure}[b]{0.235\textwidth}
         \centering
         \includegraphics[width=\textwidth]{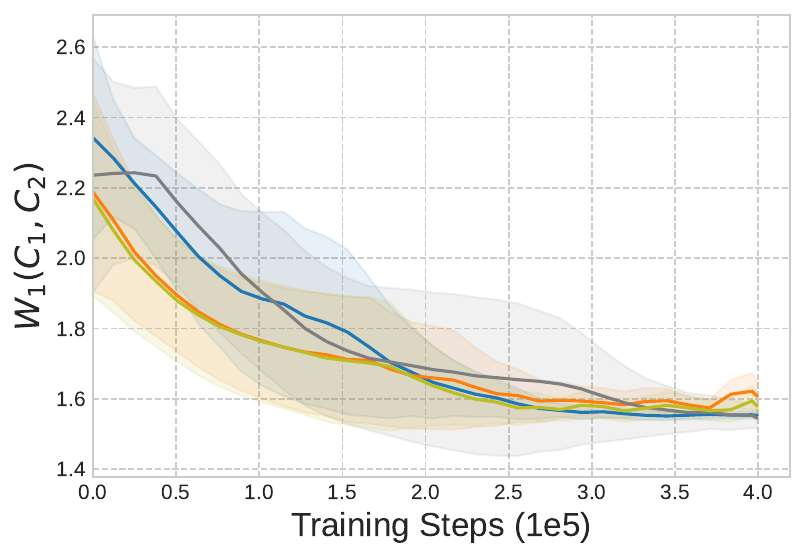}
         \caption{Credit gap.}
         \label{fig:lending_ablation_credit_cons_credit_gap}
    \end{subfigure}    

    \vspace{4pt}
    \fcolorbox{gray}{white}{
\small
\begin{minipage}{0.9\textwidth}
\begin{tabular}{p{6cm}p{6cm}}
    \cblock{sb_blue} PPO+Bisimulator &  \cblock{our_gray} PPO+Bisimulator (Rew. only) \\ 
    \cblock{sb_orange} DQN+Bisimulator & \cblock{our_yellow} DQN+Bisimulator (Rew. only)    
\end{tabular}
\end{minipage}
}

    \caption{Comparison of Bisimulator and Bisimulator (Reward only). The first row \textbf{(a-d)} shows the lending scenario where the repayment probability is only a function of the credit score, while the second row \textbf{(e-f)} presents the case where the repayment probability is a function of the credit score and a latent conscientiousness parameter. \textbf{(a, e)} Average return. \textbf{(b, f)} Recall for group 1. \textbf{(c, g)} Recall for group 2. \textbf{(d, h)} Credit gap measured as the Kantorovich distance between the credit score distributions at the end of evaluation episodes. Results are obtained on 10 seeds and 5 evaluations episodes per seed. The shaded regions show 95\% confidence intervals and plots are smoothed for visual clarity.}
    \label{fig:lending_ablation_res}
\end{figure}

\clearpage
\subsection{Case Study: College Admissions}
\label{app:college_admission_results}
\cref{fig:college_cumulative_admissions} shows the cumulative admissions granted to each group over the course of evaluation episodes. All methods regularly accept applicants from group 1, however, only Bisimulator and ELBERT-PO are granting an equal amount of admissions to group 2 while keeping high recall values (refer to \cref{fig:college_res} and \cref{tab:college_res}).

\begin{figure}[ht!]
    \centering    
    \begin{subfigure}[b]{0.48\textwidth}
         \centering
         \includegraphics[width=\textwidth]{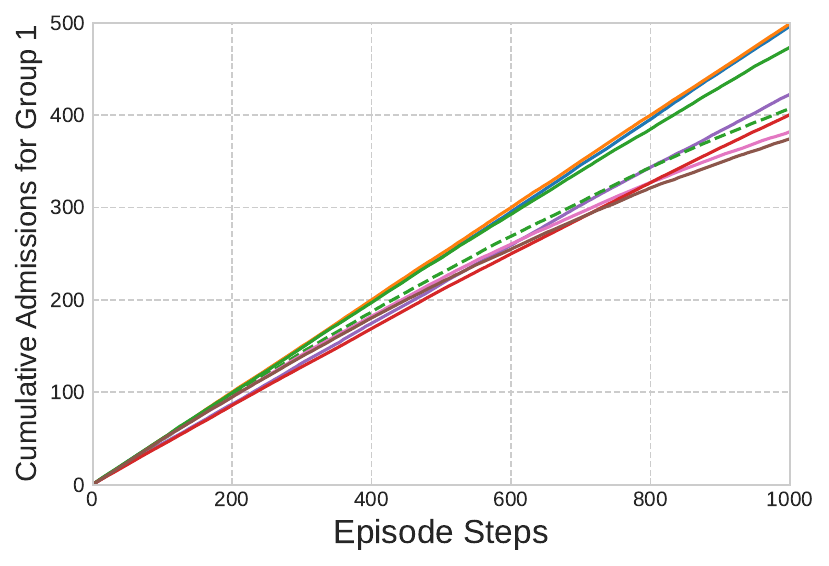}
         \caption{Cumulative admissions for group 1.}
         \label{fig:college_admissions_g1}
    \end{subfigure}
    \hfill        
    \begin{subfigure}[b]{0.48\textwidth}
         \centering
         \includegraphics[width=\textwidth]{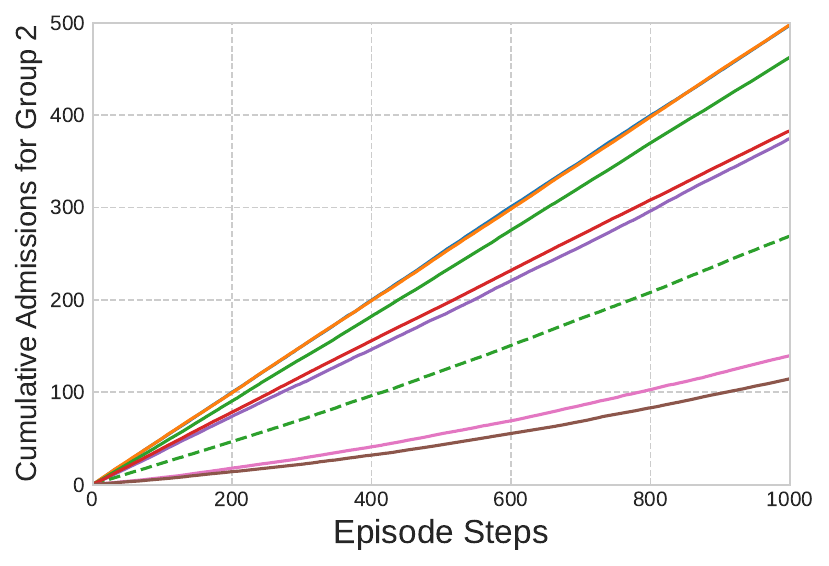}
         \caption{Cumulative admissions for group 2.}
         \label{fig:college_admissions_g2}
    \end{subfigure}

    \vspace{4pt}
      
    \vspace{4pt}
    \caption{College admission results. Cumulative admissions granted to each group over the course of evaluation episodes. Results are obtained on 10 seeds and 5 evaluations episodes per seed. Confidence intervals are not shown for visual clarity.} 
    \label{fig:college_cumulative_admissions}
\end{figure}

\cref{fig:college_recalls} shows the recall values for each group. Bisimulator obtains high recall values for both groups. Notably, the recall gap obtained by Bisimulator is the smallest among all the methods (refer to \cref{fig:college_res} and \cref{tab:college_res}).

\begin{figure}[b!]
    \centering
    \begin{subfigure}[b]{0.45\textwidth}
        \centering
        \includegraphics[width=\textwidth]{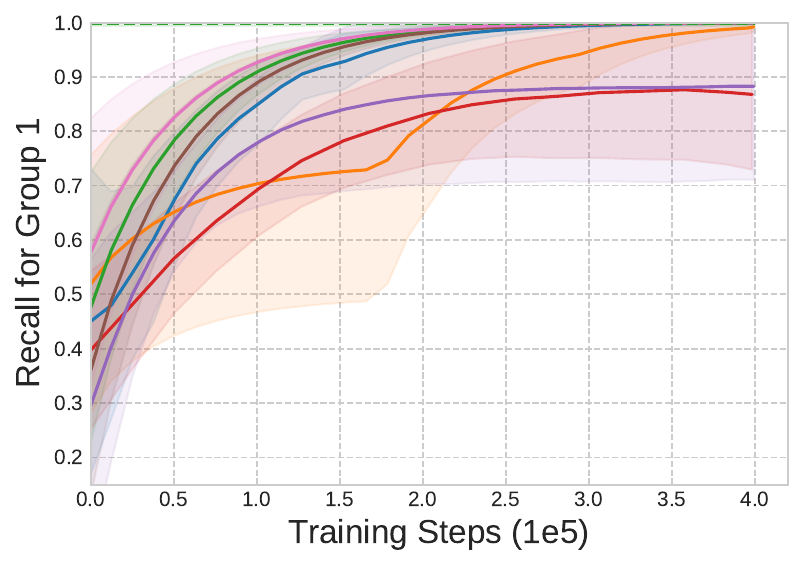}
        \caption{Recall, group 1.}
        \label{fig:college_recall_g1}
    \end{subfigure}
    \hfill
    \begin{subfigure}[b]{0.45\textwidth}
        \centering
        \includegraphics[width=\textwidth]{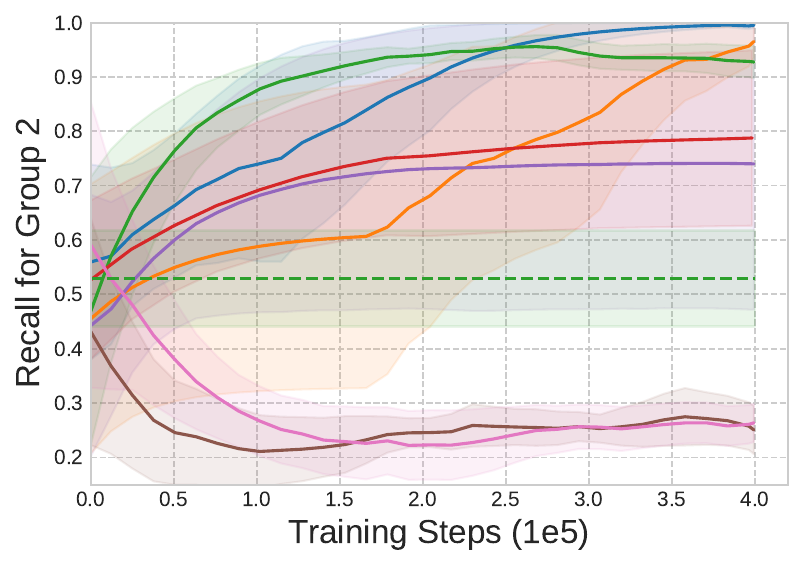}
        \caption{Recall, group 2.}
        \label{fig:college_recall_g2}
    \end{subfigure}

    \vspace{4pt}
    
    \vspace{4pt}
    \caption{College admission results. Recall values for each group over the training steps. \textbf{(a)} Recall for group 1. \textbf{(b)} Recall for group 2. Results are obtained on 10 seeds and 5 evaluations episodes per seed. The shaded regions show 95\% confidence intervals and plots are smoothed for visual clarity.}
    \label{fig:college_recalls}
\end{figure}

\cref{fig:college_ablation_res} presents a comparison between Bisimulator and Bisimulator (Reward Only), complementing the results in \cref{tab:college_res}. Similarly to the lending experiments, \rebuttal{optimizing} both dynamics and rewards improves the overall performance, specifically in terms of recall gap. However, the variant focusing only on reward optimization remains competitive.

\begin{figure}[!ht]
    \centering       
    \begin{subfigure}[b]{0.235\textwidth}
         \centering
         \includegraphics[width=\textwidth]{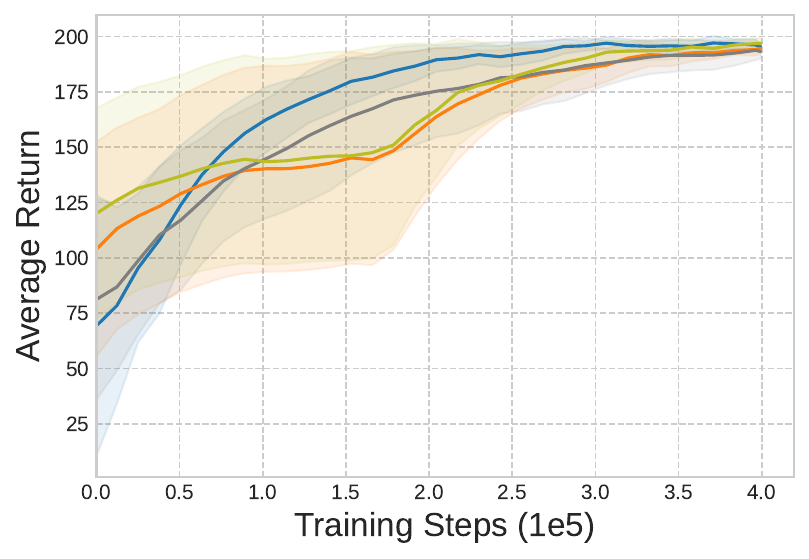}
         \caption{Average return}
         \label{fig:college_ablation_average_return}
    \end{subfigure}
    \hfill
    \begin{subfigure}[b]{0.235\textwidth}
         \centering
         \includegraphics[width=\textwidth]{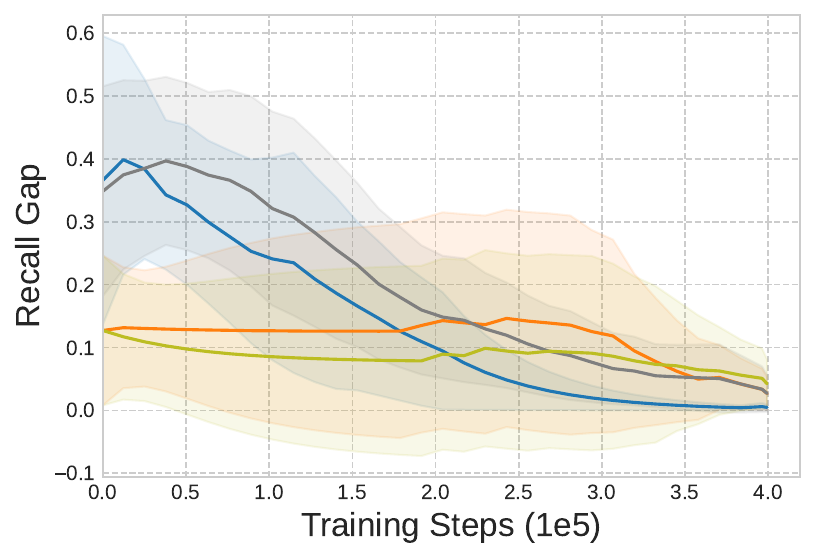}
         \caption{Recall gap}
         \label{fig:college_ablation_recall_gap}
    \end{subfigure}
    \hfill
    \begin{subfigure}[b]{0.235\textwidth}
         \centering
         \includegraphics[width=\textwidth]{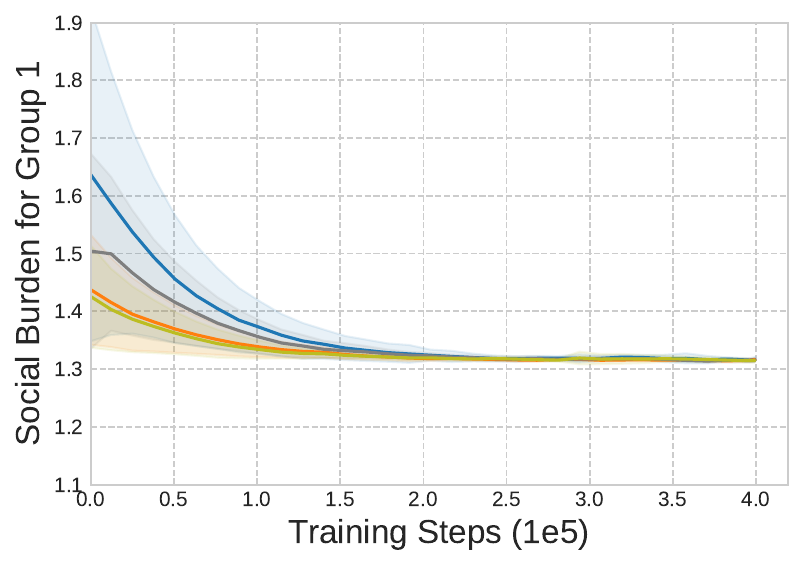}
         \caption{Social burden, group 1}
         \label{fig:college_ablation_social_burdern_g1}
    \end{subfigure}
    \hfill
    \begin{subfigure}[b]{0.235\textwidth}
         \centering
         \includegraphics[width=\textwidth]{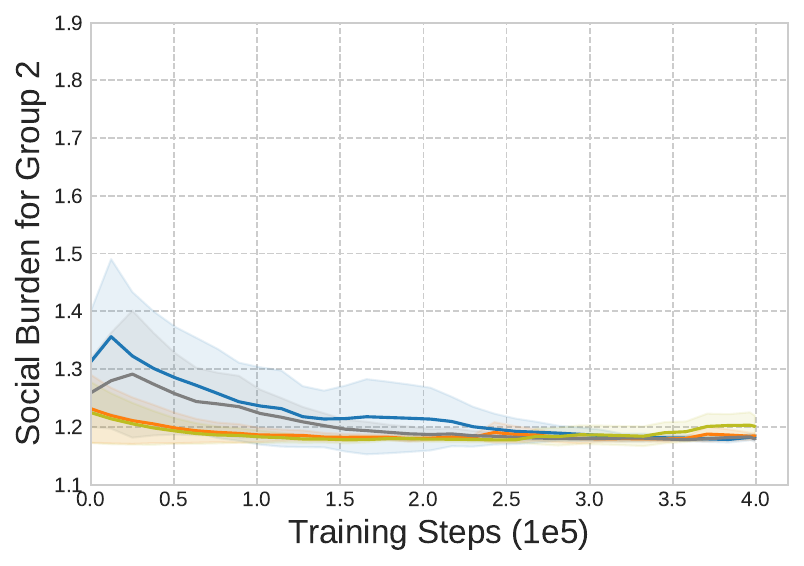}
         \caption{Social burden, group 2}
         \label{fig:college_ablation_social_burdern_g2}
    \end{subfigure}

    \vspace{4pt}
    
    \vspace{4pt}
    \caption{College admission results. 
    \textbf{(a)} Average return. \textbf{(b)} Recall gap. \textbf{(c)} Social burden for group~1. \textbf{(d)} Social burden for group 2. Results are obtained on 10 seeds and 5 evaluations episodes per seed. The shaded regions show 95\% confidence intervals and plots are smoothed for visual clarity.}
    \label{fig:college_ablation_res}
\end{figure}

\subsection{Results with Multiple Groups}
\label{app:multigroups_res}

To demonstrate the scalability of our method to more complicated scenarios, \cref{fig:lending_res_multi_groups} and \cref{tab:lending_multi_group_res} present the results obtained for the lending scenario with $10$ groups. In such problems, \cref{eq:fair_bisim_loss} is evaluated and summed across all possible group pairs during a single update to optimize the reward and/or observation dynamics.

\begin{figure}[H]
    \centering       
    \begin{subfigure}[b]{0.235\textwidth}
         \centering
         \includegraphics[width=\textwidth]{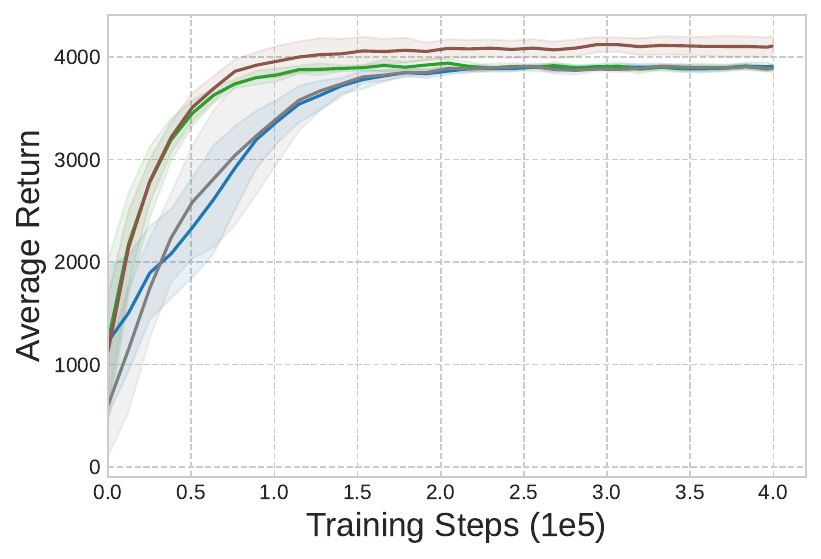}
         \caption{\rebuttal{Average return}}
         \label{fig:lending_multi_g_average_return}
    \end{subfigure}
    \hfill
    \begin{subfigure}[b]{0.235\textwidth}
         \centering
         \includegraphics[width=\textwidth]{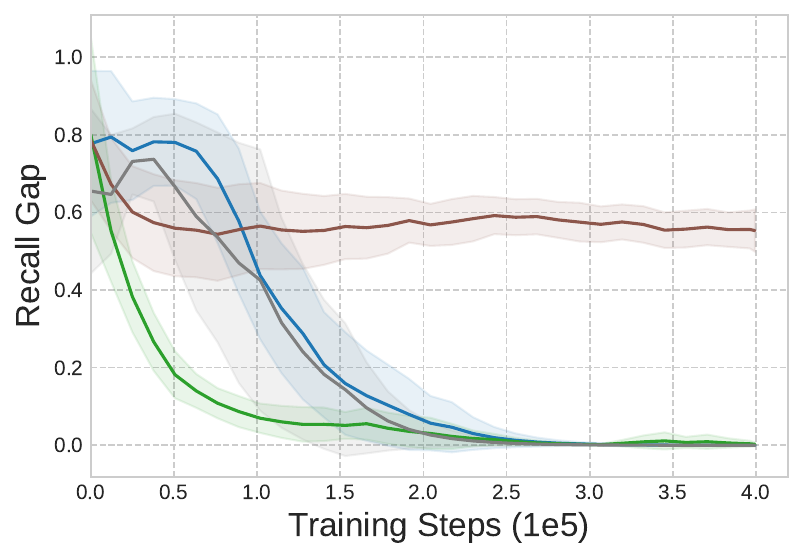}
         \caption{\rebuttal{Recall gap}}
         \label{fig:lending_multi_g_recall_gap}
    \end{subfigure}
    \hfill
    \begin{subfigure}[b]{0.235\textwidth}
         \centering
         \includegraphics[width=\textwidth]{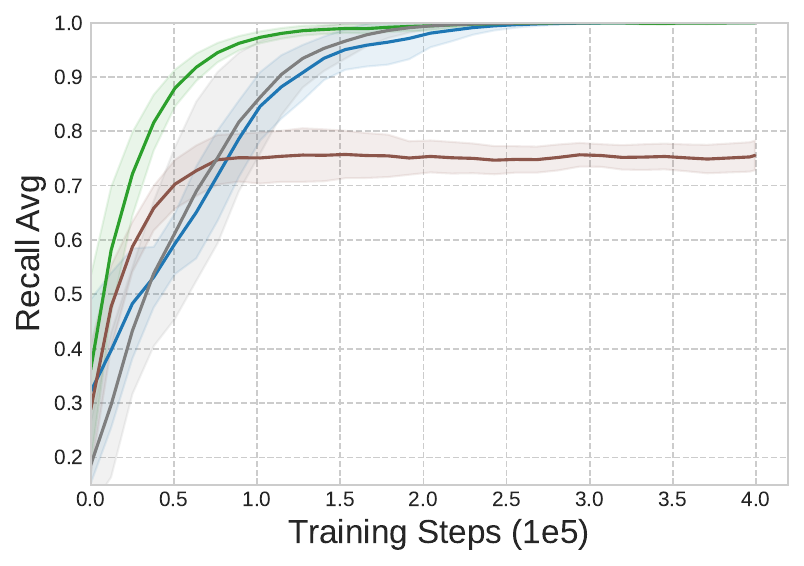}
         \caption{\rebuttal{Recall mean.}}
         \label{fig:lending_multi_g_recall_mean}
    \end{subfigure}
    \hfill
    \begin{subfigure}[b]{0.235\textwidth}
         \centering
         \includegraphics[width=\textwidth]{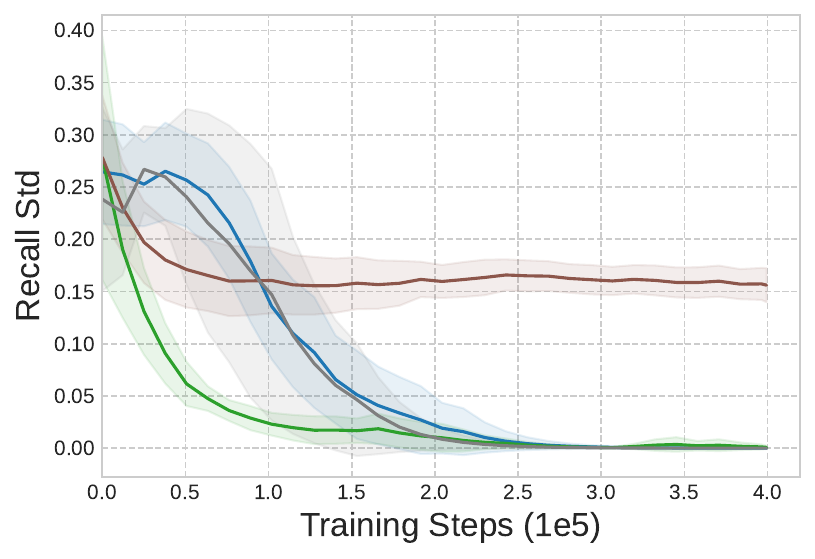}
         \caption{\rebuttal{Recall SD.}}
         \label{fig:lending_multi_g_recall_std}
    \end{subfigure}

    \vspace{4pt}
    \fcolorbox{gray}{white}{
\small
\begin{minipage}{0.9\textwidth}
\begin{tabular}{lllll}
    \cblock{sb_blue} PPO+Bisimulator & \cblock{our_gray} PPO+Bisimulator (Rew. only) & \cblock{sb_green} ELBERT-PO & \cblock{sb_brown} PPO 
\end{tabular}
\end{minipage}
}

    \vspace{4pt}
    \caption{\rebuttal{
    Lending results with 10 groups. 
    \textbf{(a)} Average return. \textbf{(b)} Recall gap, \textbf{(c)} Mean, and \textbf{(d)} Standard deviation of the recall across all groups. The shaded regions show 95\% confidence intervals and plots are smoothed for visual clarity.}}
    \label{fig:lending_res_multi_groups}
\end{figure}

\begin{table}[H]
    \centering
    \caption{Lending results for 10 groups. Reported values are the means and 95\% confidence intervals, evaluated at the end of the training. Highlighted entries indicate the best values and any other values within 5\% of the best value.}
    \label{tab:lending_multi_group_res}
    \scriptsize
    \setlength{\tabcolsep}{4.5pt}
    \begin{tabular}{lcccc}
        \toprule
        {} & \textbf{Avg. Return} & \textbf{Recall Mean} & \textbf{Recall SD} & \textbf{Recall Gap} \\
        \midrule
        PPO + Bisimulator               &            3918.87 ± 67.58 &  \textbf{1.00 ± 0.00} &           0.00 ± 0.00 &  \textbf{0.00 ± 0.00} \\
PPO + Bisimulator (Reward only) &            3872.32 ± 86.35 &  \textbf{1.00 ± 0.00} &           0.00 ± 0.00 &  \textbf{0.00 ± 0.00} \\
ELBERT-PO                       &            3921.36 ± 62.30 &  \textbf{1.00 ± 0.00} &           0.00 ± 0.00 &  \textbf{0.00 ± 0.00} \\
PPO                             &  \textbf{4127.90 ± 108.06} &           0.76 ± 0.03 &  \textbf{0.15 ± 0.02} &           0.55 ± 0.07 \\

        \bottomrule
    \end{tabular}    
\end{table}

\newpage
\section{Implementation Details}
\label{app:implementation}
The codes for Bisimulator and all of the baselines is included in the supplemental material, and will be made publicly available. 

\subsection{Hyperparameters}
Our PPO and DQN implementations are based on CleanRL \citep{huang2022cleanrl}. We have further tuned their hyperparameters, listed in \cref{tab:hyperparams_ppo,tab:hyperparams_dqn}, with grid search. The actor and critic have MLP networks with the Tanh activation function and one hidden layer with dimension of 256. As discussed in \cref{sec:methods}, one of the advantages of  Bisimulator is that is has very few hyperparameters; \cref{tab:hyperparams_bisim} present these values. We use PPO and DQN as the RL backbone, utilize Adam \citep{kingma2014adam} as the gradient-based optimizer of $J_\text{rew.}$, and use One-Plus-One \citep{juels1995stochastic, droste2002analysis} as the gradient-free optimizer of $J_\text{dyn.}$. 

\begin{table}[b!]
    \caption{Hyperparameters for PPO.}
    \label{tab:hyperparams_ppo}    
    \centering
    \begin{tabular}{lc}
        \toprule
        \textbf{Hyperparameter}                     & \textbf{Setting}  \\
        \midrule
        Optimizer                          & Adam \\
        Hidden layer width                 & 256    \\
        Learning rate                      & 5e-5 \\
        Discount factor $\gamma$           & 0.99   \\
        $\lambda$ for GAE                  & 0.95   \\
        Batch size                         & 512    \\
        Mini batch size                    & 64     \\
        Policy update epochs               & 5      \\
        Surrogate clipping coefficient     & 0.2    \\
        Entropy coefficient                & 0.01   \\
        Value function coefficient         & 0.5    \\
        Maximum norm for gradient clipping & 0.5   \\
        Clip value function loss           & True \\
        Anneal learning rate               & True \\
        \bottomrule
    \end{tabular}
\end{table}

\begin{table}[ht!]
    \caption{Hyperparameters for DQN.}
    \label{tab:hyperparams_dqn}    
    \centering
    \begin{tabular}{lc}
        \toprule
        \textbf{Hyperparameter}                     & \textbf{Setting}  \\
        \midrule
        Optimizer                          & Adam \\
        Hidden layer width                 & 256    \\
        Learning rate                      & 5e-5 \\
        Discount factor $\gamma$           & 0.99   \\        
        Batch size                         & 512    \\
        Target network update rate $\tau$  & 1 \\
        Target network update frequency    & 10 \\
        Update epochs                      & 4 \\        
        Anneal learning rate               & True \\
        \bottomrule
    \end{tabular}
\end{table}

The dynamics model $\mathcal{T}_\psi(s'|s, a, g)$ in \cref{alg:bisimulator} is implemented as an MLP that outputs a Gaussian distribution over the next state. Since the state space is discrete, we use straight-through-estimator \citep{bengio2013estimating} to propagate the gradients. 

Finally, as discussed in \cref{sec:methods}, we use quantile matching \citep{mckay1979comparison} to select the state-group pairs from the on-policy distribution. In practice, we use quartiles obtained on the batch of the data. For example, the first quartile of group 1 is matched with the first quartile of group 2 in order to estimate $J_{\text{rew.}}$ and  $J_{\text{dyn.}}$.

\begin{table}[ht!]
    \caption{Hyperparameters for Bisimulator in lending and college admission environments, to accompany \cref{alg:bisimulator}.}
    \label{tab:hyperparams_bisim}
    \centering
    \begin{tabular}{lcc}
        \toprule
        \multirow{2}{*}{\textbf{Hyperparameter}}      & \multicolumn{2}{c}{\textbf{Setting}} \\ 
        & \textbf{PPO} & \textbf{DQN}\\
        \midrule        
        Reward optimization iterations ($M$)   & 1 & 1  \\
        Observation dynamics optimization iterations ($N$) & 300 & 300 \\
        Policy update iterations ($K$)         & 1 & 1 \\
        Reward coefficient ($\alpha$)          & 5 & 1.5 \\
        \bottomrule
    \end{tabular}
\end{table}

% \begin{table}[ht!]
%     \caption{Hyperparameters for Bisimulator in the attention allocation environment, to accompany \cref{alg:bisimulator}.}
%     \label{tab:hyperparams_bisim_allocation}
%     \centering
%     \begin{tabular}{lc}
%         \toprule
%         \textbf{Hyperparameter}     & \textbf{Setting} \\         
%         \midrule        
%         Reward optimization iterations ($M$)   & 2  \\
%         Observation dynamics optimization iterations ($N$) & 300 \\
%         Policy update iterations ($K$)         & 1 \\
%         Reward coefficient ($\alpha$)          & 1 \\
%         \bottomrule
%     \end{tabular}
% \end{table}

\subsection{Baselines}
All of the baselines follow their official implementations. We started from the the suggested hyperparameters for each baseline and further tuned it with grid search for each environment. For a fair comparison among the deep RL algorithms that are based on PPO (Bisimulator+PPO, A-PPO, Lag-PPO, and ELBERT-PO), the architecture of the MLP networks and the hyperparameters of the PPO algorithm follow the details outlined in \cref{tab:hyperparams_ppo}.

\subsection{Computing Infrastructure}
Our results are obtained using Python v3.11.5, PyTorch v2.2.1 and CUDA 12.2. Experiments have been conducted on a cloud computing service with Nvidia V100 GPUs, Intel Gold 6148 Skylake CPU, and 32 GB of RAM. In this setting, each experiment takes between 1 to 2 hours for 400 thousands steps of training.

\end{document}